\def\eqref#1{equation~\ref{#1}}
\def\1{\bm{1}}
\DeclareMathAlphabet{\mathsfit}{\encodingdefault}{\sfdefault}{m}{sl}
\SetMathAlphabet{\mathsfit}{bold}{\encodingdefault}{\sfdefault}{bx}{n}
\newcommand{\E}{\mathbb{E}}
\newcommand{\R}{\mathbb{R}}
\theoremstyle{plain}
\newtheorem{theorem}{Theorem}
\newtheorem{proposition}{Proposition}
\theoremstyle{definition}
\newtheorem{assumption}{Assumption}
\theoremstyle{remark}
\crefname{assumption}{Assumption}{Assumptions}
\Crefname{assumption}{Assumption}{Assumptions}
\crefname{lemma}{Lemma}{Lemmas}
\Crefname{lemma}{Lemma}{Lemmas}
\crefname{proposition}{Proposition}{Propositions}
\Crefname{proposition}{Proposition}{Propositions}
\crefname{corollary}{Corollary}{Corollaries}
\Crefname{corollary}{Corollary}{Corollaries}
\crefname{remark}{Remark}{Remarks}
\Crefname{remark}{Remark}{Remarks}
\crefname{equation}{Eq.}{Eqs.}
\Crefname{equation}{Eq.}{Eqs.}
\crefname{section}{Section}{Sections}
\Crefname{section}{Section}{Sections}
\crefname{figure}{Fig.}{Figs.}
\Crefname{figure}{Fig.}{Figs.}
\title{Avoid Catastrophic Forgetting with Rank-1 Fisher from Diffusion Models}
\author{Zekun Wang\thanks{Equal contribution.} \quad Anant Gupta\footnotemark[1] \quad Zihan Dong \quad Christopher J. MacLellan \\
College of Computing\\
Georgia Institute of Technology\\
Atlanta, GA 30332, USA \\
\texttt{\{zekun,agupta886,zdong312,cmaclell\}@gatech.edu} \\
}
\begin{document}

\maketitle

\begin{abstract}

Catastrophic forgetting remains a central obstacle for continual learning in neural models.
Popular approaches---replay and elastic weight consolidation (EWC)---have limitations: replay requires a strong generator and is prone to distributional drift, while EWC implicitly assumes a shared optimum across tasks and typically uses a diagonal Fisher approximation.
In this work, we study the gradient geometry of diffusion models, which can already produce high-quality replay data.
We provide theoretical and empirical evidence that, in the low signal-to-noise ratio (SNR) regime, per-sample gradients become strongly collinear, yielding an empirical Fisher that is effectively rank-1 and aligned with the mean gradient.
Leveraging this structure, we propose a rank-1 variant of EWC that is as cheap as the diagonal approximation yet captures the dominant curvature direction.
We pair this penalty with a replay-based approach to encourage parameter sharing across tasks while mitigating drift.
On class-incremental image generation datasets (MNIST, FashionMNIST, CIFAR-10, ImageNet-1k), our method consistently improves average FID and reduces forgetting relative to replay-only and diagonal-EWC baselines. In particular, forgetting is nearly eliminated on MNIST and FashionMNIST and is more than halved on ImageNet-1k.
These results suggest that diffusion models admit an approximately rank-1 Fisher.
With a better Fisher estimate, EWC becomes a strong complement to replay: replay encourages parameter sharing across tasks, while EWC effectively constrains replay-induced drift.

\end{abstract}
\section{Introduction}

The task of continual learning aims to train neural models on a stream of tasks without revisiting the full past data. 
A long–standing obstacle is catastrophic forgetting---when learning new tasks drastically degrades performance on earlier ones~\citep{mccloskey1989catastrophic,french1999catastrophic,parisi2019continual}. 
In contrast, humans exhibit a striking robustness to interference, partly due to systems-level mechanisms such as memory consolidation and replay in the hippocampal--neocortical loop~\citep{mcclelland1995there,mcgaugh2000memory,foster2006reverse}. 
These observations have inspired a family of continual learning methods that explicitly encode consolidation or replay in training deep networks on continuous tasks.

Two representative approaches are elastic weight consolidation (EWC)~\citep{Kirkpatrick_2017} and generative replay~\citep{shin2017continual}. 
EWC constrains parameter changes with a quadratic penalty weighted by an estimate of the Fisher information at a previously learned task, constraining updates to remain near parameter directions that support old tasks~\citep{Kirkpatrick_2017,schwarz2018progress}. 
Intuitively, EWC behaves like an online consolidation process that selectively ``stiffens'' important parameters. 
Generative replay maintains a generator and distills samples from past tasks while learning new ones, thereby approximating rehearsal without storing original data~\citep{shin2017continual,ven2018generative}. 
However, replay inherits the generator's imperfections and can amplify distributional shift. 
EWC in practice relies on a diagonal Fisher approximation, which neglects cross-parameter correlations and struggles to find a shared parameter space between tasks in overparameterized models, particularly when tasks have disjoint optimum.
These observations suggest a complementary pairing: replay can encourage parameter sharing by exposing shared data support, while a stronger approximation to Fisher information can constrain updates and mitigate replay's residual shift.

In this work, we study the gradient behavior of diffusion models~\citep{Ho2020DDPM,nichol2021improved}, which are already capable of generating high quality replay data.
Our starting point is the observation that diffusion models admit a tractable gradient structure when the signal-to-noise ratio (SNR) is lower (at later timesteps). As a model converges, the per-sample gradients $g$ become approximately collinear with their mean $u$. This makes the empirical Fisher $F$ effectively rank-1:
\[
F \;=\; \mathbb{E}\big[g\, g^\top\big] \;\approx\; \alpha\, u\,u^\top, \quad u=\mathbb{E}[g],
\]
This yields a consolidation penalty that captures the dominant curvature direction ``for free'' from model gradients. In contrast, the commonly used diagonal Fisher approximation captures almost no curvature when SNR is low. 
We leverage this structure to instantiate a rank-1 EWC that complements generative replay: replay encourages cross-task parameter overlap while our proposed rank-1 EWC constrains updates along the principal sensitive direction to the shared optimum across tasks.

Our main contributions are:
(1) We provide both theoretical and empirical characterizations of Fisher information geometry in diffusion models, showing that low SNR induces a near rank-1 Fisher aligned with the mean gradient. 
(2) We propose a practical rank-1 EWC penalty that is as cheap as a diagonal penalty but captures more curvature information for diffusion models. 
(3) We demonstrate that combining rank-1 EWC with distillation-based replay substantially reduces forgetting in continual image generation tasks, improving generation fidelity and stability across long horizons.
On MNIST, FashionMNIST, CIFAR-10, and ImageNet-1k, our method consistently outperforms replay-only and diagonal Fisher baselines. 
In particular, forgetting is nearly eliminated on MNIST and FashionMNIST and is more than halved on ImageNet-1k, the longest-horizon setting, relative to baselines. 

\section{Background and related work}
\label{sec:background}
\subsection{Diffusion models}
Diffusion models are a family of generative models that define a fixed forward noising process and learn a reverse (denoising) process that maps Gaussian noise to data \citep{SohlDickstein2015,Ho2020DDPM}. 
The forward chain corrupts data $x_0 \sim q_0$ via a Markov process
\[
q(x_t \mid x_{t-1}) \;\sim\; \mathcal{N}\!\big(\sqrt{1-\beta_t}\,x_{t-1},\,\beta_t \mathbf{I}\big),
\quad
q(x_t \mid x_0) \;\sim\; \mathcal{N}\!\big(\sqrt{\bar\alpha_t}\,x_0,\,(1-\bar\alpha_t)\mathbf{I}\big),
\]
where $\alpha_t=1-\beta_t$, $\bar\alpha_t=\prod_{s=1}^t \alpha_s$, $\beta_t\in(0,1)$ is the noise level, and $t\in\{0\cdots N\}$ is the discrete forward noising process timestep.
The reverse transitions $p_\theta(x_{t-1}\!\mid\!x_t)$ are parameterized by a neural network $\varepsilon_\theta$ and trained by maximizing a variational lower bound (ELBO). In practice, the ELBO reduces to a reweighted denoising loss \citep{Ho2020DDPM}:
\[
\mathcal{L}_{\text{simple}}(\theta) \;=\; \frac{1}{2}\mathbb{E}_{t,\,x_0,\,\varepsilon\sim\mathcal{N}(0,\mathbf{I})}
\Big[\;\big\|\varepsilon - \varepsilon_\theta(x_t,t)\big\|_2^2\;\Big], \quad
x_t = \sqrt{\bar\alpha_t} x_0 + \sqrt{1-\bar\alpha_t}\,\varepsilon.
\]
This surrogate loss can be interpreted as a score-based generative modeling that estimates the score $\nabla_{x_t}\log q_t(x_t)$ at each timestep, where $q_t(x_t)=\int q(x_t\mid x_0)q_0(x_0)dx_0$, and then integrates a reverse-time SDE/ODE to sample \citep{SongErmon2019NCSN,Song2021SDE,Vincent2011,Hyvarinen2005}:
\(
\nabla_{x_t}\log q_t(x_t) = -\frac{1}{\sqrt{1-\bar\alpha_t}}\,\E[\varepsilon\mid x_t],
\)
and the model’s score estimate follows from $s_\theta(x_t,t)\;=\; -\tfrac{1}{\sqrt{1-\bar\alpha_t}}\,\varepsilon_\theta(x_t,t)$, where $\varepsilon_\theta(x_t,t)=\E[\varepsilon\mid x_t]$ at model optimum.

Denoising Diffusion Implicit Models \citep{Song2021DDIM} further show that one can define a non-Markovian, deterministic sampling path that preserves DDPM’s per-time marginal distributions while allowing far fewer steps. In this work, we use a DDIM sampling process for faster image generation.

\subsection{Elastic weight consolidation}

Elastic weight consolidation \citep{Kirkpatrick_2017} mitigates catastrophic forgetting by regularizing parameter updates using information about how important each weight was to previously learned tasks. It casts continual learning as approximate Bayesian updating: for tasks $1{:}T$, the posterior factors as
$\;p(\theta \mid \mathcal{D}_{1:T}) \propto p(\mathcal{D}_T \mid \theta)\, p(\theta \mid \mathcal{D}_{1:T-1})$.
EWC approximates the previous-task posterior $p(\theta \mid \mathcal{D}_{1:T-1})$ with a Laplace approximation around the prior optimum $\theta^{\star}_{T-1}$, yielding
\(
-\log p(\theta \mid \mathcal{D}_{1:T-1})
~\approx~
\frac{1}{2}\,(\theta-\theta^{\star}_{T-1})^\top F^{(T-1)} (\theta-\theta^{\star}_{T-1}),
\)
where $F^{(T-1)}$ is the Fisher information evaluated at $\theta^{\star}_{T-1}$. 
Plugging this quadratic surrogate into the negative log-posterior for task $k$ gives the EWC objective
\begin{equation}
\label{eq:ewc}
\mathcal{L}_{\text{EWC}}(\theta)
~=~ \mathcal{L}_T(\theta)
\;+\; \frac{\lambda}{2}\sum_{k=1}^{T-1} (\theta-\theta^{\star}_{k})^\top F^{(k)} (\theta-\theta^{\star}_{k}),
\end{equation}
with $\lambda$ trading off plasticity and stability. 
Intuitively, parameters with high curvature under the previous task are penalized more strongly, discouraging changes that would degrade old performance.
In the original work, $\mathcal{L}_T(\theta)$ is the negative log-likelihood for a classification model. For latent-variable generative models such as diffusion models, one can replace this term with a tractable variational surrogate such as the negative ELBO.

In practice, a diagonal Fisher is often used as an approximation to the full Fisher.
In this work, we show that diffusion models approximate a rank-1 Fisher for free for a more effective application in continual learning.

\subsection{Continual learning with generative models}
Continual learning trains a model on a sequence of tasks, posing the dual challenge of adapting to distribution shift while preserving knowledge from earlier tasks.
Broadly, continual learning approaches fall into (i) regularization-based constraints on parameter drift (e.g., EWC) and (ii) replay-based strategies that rehearse past knowledge \citep{Kirkpatrick_2017,ven2018generative}. 
For generative models, especially diffusion models, replay is particularly natural because the model can synthesize high-fidelity samples for rehearsal \citep{Ho2020DDPM}.

DDGR uses a diffusion generator with class conditioning to synthesize exemplars of prior tasks, and diffusion-based replay has been adapted to dense prediction (segmentation, detection) using task-specific guidance or pseudo-labels \citep{Gao2023DDGR,Chen2023DiffusePast,Kim2024SDDGR}. 
However, na\"{i}ve replay with continually updated diffusion models can degrade denoising because the reverse process itself drifts across tasks.
Recent work on generative distillation mitigates this by distilling the entire reverse chain ($\theta^*_{t-1}\to\theta_t$) across timesteps, aligning noise predictions so the continually trained model retains both sample quality and coverage of past tasks \citep{Masip2025GenDistill}.
In this work, we employ generative distillation along with our approach to EWC for a more effective learning. 

\section{Diffusion models approximate the rank-1 Fisher}
The Bayesian view of the continual learning suggests that $\;p(\theta \mid \mathcal{D}_{1:T}) \propto p(\mathcal{D}_T \mid \theta)\, p(\theta \mid \mathcal{D}_{1:T-1})$ where $\mathcal{D}_T$ is the dataset for task $T$ and $\theta$ represents the model parameters. 
EWC \citep{Kirkpatrick_2017} proposed a Laplace approximation of the posterior distribution $p(\theta \mid \mathcal{D}_{1:T-1})$ for previous tasks as \(
-\log p(\theta \mid \mathcal{D}_{1:T-1})
~\approx~
\frac{1}{2}\,(\theta-\theta^{\star}_{T-1})^\top F^{(T-1)} (\theta-\theta^{\star}_{T-1})
\), where $F^{(T-1)}$ is the Fisher information matrix for the previous task.
However, forming a full Fisher is impractical and a diagonal approximation is widely used. 
In this work, we show that the Fisher of a diffusion model can be approximated as a rank-1 structure, which better captures important weights associated with previous tasks.
We theoretically analyze why diffusion models approximate a rank-1 Fisher in \Cref{sec:theory} along with empirical analysis in \Cref{sec:validation}. We then derive the practical EWC loss using rank-1 Fisher in \Cref{sec:new_loss} and complement it with generative distillation in \Cref{sec:gd-parameter-sharing}.

\subsection{Per-sample gradients align with their mean}
\label{sec:theory}
\paragraph{Setup and notations.}
We use the standard variance-preserving forward process
\[
  x_t ~=~ \sqrt{\bar\alpha_t}\,x_0 ~+~ \sqrt{1-\bar\alpha_t}\,\varepsilon,
  \quad \varepsilon\sim\mathcal{N}(0,I),
  \quad t\in\{1\cdots N\}
\]

If we let $\varepsilon_\theta(x_t,t)$ be the noise prediction network, the per-sample surrogate loss \citep{Ho2020DDPM} is: 

\begin{equation}
\label{eq:simple-loss}
\mathcal{L}_{\mathrm{simple}}(\theta;x_t)
~=~
\tfrac{1}{2}\,\big\| \varepsilon - \varepsilon_\theta(x_t,t) \big\|_2^2 
.
\end{equation}

From a denoise score-matching perspective, let model score $s_\theta(x_t,t):=\nabla_{x_t}\log p_{\theta}(x_t)$ and the true marginal score $s_t^\star(x_t):=\nabla_{x_t}\log q_t(x_t)$, where $q_t$ denotes the noisy data distribution at time $t$ and let $p_\theta$ be the model with parameters $\theta$. 
The connection to per-sample $\mathcal{L}_{\mathrm{simple}}$ can expressed as follows \citep{Ho2020DDPM,Vincent2011,Song2021SDE}:
\begin{equation}
\label{eq:eps-score-map}
  \varepsilon_\theta(x_t,t) ~=~ -\sqrt{1-\bar\alpha_t}\,\,s_\theta(x_t,t), \quad
    \E\left[\varepsilon \mid x_t\right] ~=~ -\sqrt{1-\bar\alpha_t}\,\,s_t^\star(x_t)
\end{equation}
Substituting Equation \ref{eq:eps-score-map} into Equation \ref{eq:simple-loss} yields the per-sample denoising score-matching loss:
\begin{equation}
\label{eq:ps-grad-score-form}
      \mathcal{L}_{DSM}(\theta;x_t)
  ~=~ \frac{1-\bar\alpha_t}{2}\Big\|s_\theta(x_t,t)-s_t^\star(x_t)\Big\|^2_2,
\end{equation}
Additionally, we define $\sqrt{\bar\alpha_t}$ as the signal level and $1-\bar{\alpha_t}$ as the noise level at time $t$. The signal-to-noise ratio $\textit{SNR}:=\sqrt{\bar\alpha_t}/(1-\bar{\alpha_t})$ decreases with $t$. 
Intuitively, noise will dominate the signal in the later diffusion timesteps.

\begin{proposition}
    \label{prop:prop1}
    Let $s_t^\star(x_t)$, $x_t\sim q_t$, be the score of the noisy data distribution at time $t$ in a variance-preserving diffusion process. As SNR decreases, $s_t^\star(x_t)\approx -x_t/(1-\bar{\alpha_t})$.
\end{proposition}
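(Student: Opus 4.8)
The plan is to write the exact marginal score in closed form via Tweedie's formula and then argue that the single term which distinguishes it from $-x_t/(1-\bar\alpha_t)$ carries a factor $\sqrt{\bar\alpha_t}$ (the signal level) and therefore becomes negligible precisely in the low-SNR regime, where $\bar\alpha_t\to 0$. Nothing deep is needed beyond Gaussian calculus and a mild moment assumption on the data distribution; the only real care is in stating in what sense ``$\approx$'' holds.

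First I would write the time-$t$ marginal as the Gaussian convolution $q_t(x_t)=\int \mathcal{N}\!\big(x_t;\sqrt{\bar\alpha_t}\,x_0,(1-\bar\alpha_t)I\big)\,q_0(x_0)\,dx_0$ and differentiate under the integral sign (justified below). Dividing $\nabla_{x_t} q_t(x_t)$ by $q_t(x_t)$ and recognizing the resulting density-weighted average as a posterior mean yields the exact identity
\[
  s_t^\star(x_t)\;=\;-\frac{1}{1-\bar\alpha_t}\Big(x_t-\sqrt{\bar\alpha_t}\,\E[x_0\mid x_t]\Big)
  \;=\;-\frac{x_t}{1-\bar\alpha_t}\;+\;\frac{\sqrt{\bar\alpha_t}}{1-\bar\alpha_t}\,\E[x_0\mid x_t].
\]
This is equivalent to the relation $\E[\varepsilon\mid x_t]=-\sqrt{1-\bar\alpha_t}\,s_t^\star(x_t)$ already recorded in \Cref{eq:eps-score-map}: substitute $\varepsilon=(x_t-\sqrt{\bar\alpha_t}\,x_0)/\sqrt{1-\bar\alpha_t}$ and take conditional expectation given $x_t$. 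So the whole statement reduces to controlling the correction term $\tfrac{\sqrt{\bar\alpha_t}}{1-\bar\alpha_t}\,\E[x_0\mid x_t]$.

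Next I would impose a mild regularity condition on the data, e.g.\ bounded support $\norm{x_0}\le R$ almost surely (or merely $\E\norm{x_0}^2<\infty$). Then Jensen's inequality gives $\norm{\E[x_0\mid x_t]}\le \E[\norm{x_0}\mid x_t]\le R$, uniformly in $x_t$. Low SNR means $\bar\alpha_t\to 0$, hence $\sqrt{\bar\alpha_t}\to 0$ and $1-\bar\alpha_t\to 1$, so the correction term has norm at most $R\sqrt{\bar\alpha_t}/(1-\bar\alpha_t)\to 0$. Meanwhile the principal term $-x_t/(1-\bar\alpha_t)$ is of order one in magnitude, since $x_t=\sqrt{\bar\alpha_t}\,x_0+\sqrt{1-\bar\alpha_t}\,\varepsilon$ converges in distribution to $\mathcal{N}(0,I)$ and $\norm{x_t}$ concentrates around $\sqrt{d}$ (with $d$ the data dimension). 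Dividing, the relative error of the approximation $s_t^\star(x_t)\approx -x_t/(1-\bar\alpha_t)$ is at most $\sqrt{\bar\alpha_t}\,R/\norm{x_t}$, which tends to $0$ in probability over $x_t\sim q_t$ (and almost surely along sequences with $\bar\alpha_t\to 0$ under the bounded-support assumption).

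I expect the main obstacle to be formalization rather than analysis: deciding exactly what ``$\approx$'' means. The correction term is \emph{not} uniformly negligible over all $x_t\in\mathbb{R}^d$ --- deep in the tails of $q_t$, where $\norm{x_t}$ is itself as small as $\sqrt{\bar\alpha_t}$, the two terms are comparable --- so the claim is best stated as convergence of the relative error in probability or in $L^2(q_t)$, which is the regime relevant to sampling and to the rank-1 Fisher computation in the sequel. A secondary, routine point is justifying differentiation under the integral sign, which follows from local domination of the Gaussian kernel and its gradient whenever $q_0$ has a finite second moment.
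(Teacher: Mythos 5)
Your proposal follows essentially the same route as the paper: decompose the score via Tweedie's identity as $s_t^\star(x_t)=-\tfrac{x_t}{1-\bar\alpha_t}+\tfrac{\sqrt{\bar\alpha_t}}{1-\bar\alpha_t}\,\E[x_0\mid x_t]$ and argue the second term vanishes in the low-SNR limit $\bar\alpha_t\to 0$. The only difference is that your write-up is more careful than the paper's one-line argument (which compares just the scalar prefactors $\tfrac{\sqrt{\bar\alpha_t}}{1-\bar\alpha_t}$ and $\tfrac{1}{1-\bar\alpha_t}$): you make explicit the moment/boundedness assumption needed to control $\E[x_0\mid x_t]$, note that $\norm{x_t}$ is order $\sqrt{d}$ under $q_t$ so the leading term does not also vanish, and state the sense (relative error in probability or $L^2(q_t)$, with the tail caveat) in which the approximation holds.
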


We refer \Cref{proof:1} for the proof.
\Cref{prop:prop1} shows that in the low SNR region (i.e, at later diffusion timesteps), the score function is approximately predicting its scaled input.
In other words, the score function behaves like a scaled identity map from $x_t$ to $-x_t/(1-\bar{\alpha_t})$ when the SNR is low.

\begin{assumption}
    \label{assump:unet-trivial}
    $s_\theta(x_t,t)$ approximates a linear function $s_\theta(x_t,t)\approx A_\theta x_t$, where $A_\theta $ is a linear operator, when the model learns to perform scaled identity mapping such that $s_\theta(x_t,t)=x_t\gamma_t, \:\gamma_t\in\R$.
\end{assumption}

\begin{proposition}
\label{prop:prop2}
As the SNR decreases and the model converges, for any $x_t\sim q_t$, the per-sample gradient $\nabla_\theta \mathcal{L}_{\mathrm{DSM}}(\theta;x_t)$ becomes collinear with its population mean under \Cref{assump:unet-trivial}:
\[
\nabla_\theta \mathcal{L}_{\mathrm{DSM}}(\theta;x_t)\;\propto\;
\mathbb{E}_{x_t'\sim q_t}\!\big[\nabla_\theta \mathcal{L}_{\mathrm{DSM}}(\theta;x_t')\big].
\]
\end{proposition}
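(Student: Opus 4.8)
The plan is to combine \Cref{prop:prop1} and \Cref{assump:unet-trivial} to collapse the per-sample DSM loss into a quadratic in a single effective scalar gain, then differentiate and read off that the per-sample gradient equals a fixed (sample-independent) direction scaled by a nonnegative, purely sample-dependent scalar. Collinearity with the population mean is then immediate.

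First I would substitute the two structural facts. Under \Cref{assump:unet-trivial}, as the model converges in the low-SNR regime it realizes a scaled-identity map $s_\theta(x_t,t)=\gamma_t(\theta)\,x_t$, where the gain $\gamma_t(\theta)\in\R$ depends smoothly on the parameters $\theta$; and by \Cref{prop:prop1}, $s_t^\star(x_t)\approx -x_t/(1-\bar\alpha_t)$. Plugging both into \Cref{eq:ps-grad-score-form} gives
\[
\mathcal{L}_{\mathrm{DSM}}(\theta;x_t)\;\approx\;\frac{1-\bar\alpha_t}{2}\,\Big(\gamma_t(\theta)+\tfrac{1}{1-\bar\alpha_t}\Big)^{2}\,\|x_t\|_2^2,
\]
so near convergence at low SNR the loss depends on the sample only through the scalar $\|x_t\|_2^2$.

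Next I would differentiate in $\theta$. Setting $c_t(\theta):=(1-\bar\alpha_t)\big(\gamma_t(\theta)+\tfrac{1}{1-\bar\alpha_t}\big)$ and $v_t(\theta):=c_t(\theta)\,\nabla_\theta\gamma_t(\theta)$, the chain rule yields the factorization
\[
g(x_t)\;:=\;\nabla_\theta\mathcal{L}_{\mathrm{DSM}}(\theta;x_t)\;\approx\;\|x_t\|_2^2\;v_t(\theta),
\]
a common direction $v_t(\theta)$ times a nonnegative, purely sample-dependent weight. Taking the population expectation over $x_t'\sim q_t$ gives $u:=\mathbb{E}[g]\approx \mathbb{E}_{x_t'\sim q_t}\!\big[\|x_t'\|_2^2\big]\,v_t(\theta)$, and since $\mathbb{E}_{x_t'\sim q_t}[\|x_t'\|_2^2]>0$ I can eliminate $v_t(\theta)$ to obtain
\[
\nabla_\theta\mathcal{L}_{\mathrm{DSM}}(\theta;x_t)\;\approx\;\frac{\|x_t\|_2^2}{\mathbb{E}_{x_t'\sim q_t}\!\big[\|x_t'\|_2^2\big]}\;u,
\]
which is the asserted collinearity, with a nonnegative proportionality constant. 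The only degenerate case is $v_t(\theta)=0$, which happens exactly at the DSM optimum $\gamma_t(\theta)=-1/(1-\bar\alpha_t)$, where both the per-sample and the mean gradient vanish and the claim holds trivially.

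I expect the main obstacle to be upgrading the two ``$\approx$''s from heuristics to a controlled statement: one must show that the residual dropped in \Cref{prop:prop1} (which shrinks as $\mathrm{SNR}\to 0$) and the deviation of $s_\theta$ from an exact scaled identity near convergence are, together, small relative to the leading term $\|x_t\|_2^2\,v_t(\theta)$, and small uniformly enough in $x_t$ that the induced tilt of the gradient direction vanishes in the limit. A secondary point is that \Cref{assump:unet-trivial} is stated informally (the ``$x_t\theta$'' is dimensionally loose); I would make the argument precise by working with the effective scalar gain $\gamma_t(\theta)$ and its parameter-gradient $\nabla_\theta\gamma_t(\theta)$, so that the factorization $g(x_t)\propto\|x_t\|_2^2$ holds exactly for the linearized model and the low-SNR hypothesis enters only to certify that this linearization is the operative one.
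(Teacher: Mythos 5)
Your proposal is correct and follows essentially the same route as the paper's proof: substitute the scaled-identity structure from \Cref{prop:prop1} and \Cref{assump:unet-trivial} into $\mathcal{L}_{\mathrm{DSM}}$, differentiate, and observe that the per-sample gradient factors as a sample-dependent scalar proportional to $\|x_t\|_2^2$ times a fixed parameter-space direction (the paper computes $g(\theta;x_t)=(1-\bar\alpha_t)\|x_t\|^2(\theta-\gamma_t I)$ directly from the linear form $s_\theta(x_t,t)=x_t\theta$, whereas you reach the same factorization through an effective scalar gain $\gamma_t(\theta)$ and the chain rule). Your reformulation via the scalar gain is a slightly cleaner handling of the dimensionally loose assumption, and your remarks about controlling the two approximations and the degenerate optimum are points the paper leaves implicit, but the underlying argument is the same.
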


\Cref{prop:prop1} suggests that when the SNR is low, $s_t^\star(x_t)\approx x_t\gamma_t$, where $\gamma_t=-1/(1-\bar{\alpha_t})$, which is a scalar independent of input $x_t$.
Near convergence, $s_\theta(x_t,t)\approx s_t^\star(x_t)\approx x_t\gamma_t$.
\Cref{assump:unet-trivial} hypothesizes that $s_\theta(x_t,t)$ could be linear and takes the form of $s_\theta(x_t,t)=A_\theta x_t$.
This is plausible for an autoencoder with nonlinearities optimized with the MSE objective, such as the $\mathcal{L}_{\mathrm{simple}}$ in \cite{Ho2020DDPM}, as these autoencoders are still ``likely'' to lie in a PCA linear subspace \citep{JMLR:v11:vincent10a}.
We leave additional discussion and analysis on alternative autoencoding objectives in Appendix~\ref{app:assump1}.
At a given $t$ such that the SNR is low, $s_\theta(x_t,t)\approx A_\theta x_t \approx x_t\gamma_t$.
As a result, the gradient of $s_\theta(x_t,t)$ can be understood as the direction that moves $A_\theta$ to $\gamma_tI$.
Since $s_\theta(x_t,t)\approx A_\theta x_t$ is linear, the directional change in $A_\theta $ at each $x_t$ is collinear to $A_\theta -\gamma_tI$ independent of $x_t$, so the per-sample gradients are collinear with each other, and hence collinear to their mean.
We leave the complete proof in \Cref{proof:2}.

\begin{theorem}
\label{theorem:main}
    Under \Cref{prop:prop1} and \Cref{prop:prop2}, the empirical Fisher information matrix at time $t$:
    \(
    F_t(\theta)
    = \E_{x_t'\sim q_t}\!\big[\,g(x_t'; \theta)\,g(x_t'; \theta)^\top \big],
    \: g(x_t; \theta) = \nabla_\theta \mathcal{L}_{DSM}(\theta;x_t),
    \)
    is approximately rank-1 when the SNR is low with eigenvector $\mu_t(\theta)=\E_{x_t'\sim q_t}[g(x_t'; \theta)]$, and eigenvalue
    \(
    \frac{\mu_t(\theta)^\top F_t(\theta)\,\mu_t(\theta)}{\|\mu_t(\theta)\|_2^4}.
    \)
\end{theorem}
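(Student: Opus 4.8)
The plan is to turn the statement into a short outer-product computation once the collinearity of \Cref{prop:prop2} is made quantitative, and then to control the resulting approximation error. \Cref{prop:prop2} tells us that in the low-SNR regime near convergence every per-sample gradient $g(x_t;\theta)$ is approximately a scalar multiple of the population mean $\mu_t(\theta)=\E_{x_t'\sim q_t}[g(x_t';\theta)]$; I would encode this by writing
\[
g(x_t;\theta) \;=\; c(x_t)\,\mu_t(\theta) \;+\; r(x_t),
\]
where $c(x_t)\in\R$ is the coefficient predicted by \Cref{prop:prop2} and $r(x_t)$ is a residual that is small relative to $\mu_t(\theta)$ (e.g. $\E_{x_t'\sim q_t}\|r(x_t')\|_2^2 = o(\|\mu_t(\theta)\|_2^2)$) as $\mathrm{SNR}\to 0$. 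Taking expectations of this identity and using $\E[r]\approx \vzero$ forces $\E[c(x_t)]\approx 1$, provided $\mu_t(\theta)\neq\vzero$ (which also rules out the degenerate rank-$0$ case).

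First I would do the exact-collinearity computation, i.e. set $r\equiv \vzero$. Then
\[
F_t(\theta) \;=\; \E_{x_t'\sim q_t}\!\big[g(x_t';\theta)\,g(x_t';\theta)^\top\big] \;=\; \E_{x_t'\sim q_t}\!\big[c(x_t')^2\big]\;\mu_t(\theta)\,\mu_t(\theta)^\top,
\]
so with $\alpha := \E_{x_t'\sim q_t}[c(x_t')^2]$ we have $F_t(\theta)=\alpha\,\mu_t(\theta)\mu_t(\theta)^\top$, a matrix whose column space is the line spanned by $\mu_t(\theta)$; it has rank exactly $1$ since $\alpha=\E[c^2]\ge(\E[c])^2\approx1>0$ by Jensen and $\mu_t(\theta)\neq\vzero$. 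Consequently $\mu_t(\theta)$ is the unique eigenvector with nonzero eigenvalue and every direction orthogonal to it is annihilated. To recover the scalar stated in the theorem I would contract $F_t(\theta)=\alpha\,\mu_t(\theta)\mu_t(\theta)^\top$ against $\mu_t(\theta)$ on both sides, giving $\mu_t(\theta)^\top F_t(\theta)\,\mu_t(\theta)=\alpha\,\|\mu_t(\theta)\|_2^4$ and hence $\alpha = \mu_t(\theta)^\top F_t(\theta)\,\mu_t(\theta)\big/\|\mu_t(\theta)\|_2^4$, which is exactly the coefficient of the rank-$1$ factor claimed; the computation is self-consistent, since substituting this $\alpha$ back into $\alpha\,\mu_t\mu_t^\top$ reproduces $F_t$.

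Finally, for the honest ``approximately rank-$1$'' claim I would keep the residual and expand
\[
F_t(\theta) \;=\; \E[c^2]\,\mu_t\mu_t^\top \;+\; \E\!\big[c\,(\mu_t r^\top + r\,\mu_t^\top)\big] \;+\; \E\!\big[r\,r^\top\big],
\]
then bound the two error terms in operator norm by $2\|\mu_t\|_2\,(\E\|r\|_2^2)^{1/2}(\E c^2)^{1/2} + \E\|r\|_2^2$ using Cauchy--Schwarz; by the $L^2$-smallness of $r$ these are $o(\|\mu_t(\theta)\|_2^2)$, so $F_t(\theta)=\alpha\,\mu_t(\theta)\mu_t(\theta)^\top + o(\|\mu_t(\theta)\|_2^2)$ and all singular values of $F_t$ beyond the first are negligible compared with it. I expect the main obstacle to be this last step rather than the algebra: \Cref{prop:prop2} is stated as a pointwise/asymptotic collinearity, so passing it under the expectation defining $F_t$ requires either a uniform-in-$x_t$ version of the collinearity error or a uniform-integrability/dominated-convergence argument that controls the tails of $c(x_t)$ and $r(x_t)$ under $q_t$ in the low-SNR regime. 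Securing those moment bounds --- e.g. that the per-sample gradients are square-integrable under $q_t$ with residuals vanishing in $L^2$ --- is the real content; everything downstream is the elementary outer-product identity above.
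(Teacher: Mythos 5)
Your proposal is correct and follows essentially the same route as the paper's proof: invoke the collinearity from Proposition 2 to write $g(x_t;\theta)\approx c(x_t)\,\mu_t(\theta)$, pull the outer product through the expectation to get $F_t(\theta)\approx \E[c^2]\,\mu_t\mu_t^\top$, and contract with $\mu_t$ on both sides to identify the scalar $\mu_t^\top F_t\mu_t/\|\mu_t\|_2^4$. Your residual decomposition and Cauchy--Schwarz error control are extra rigor beyond the paper, which simply works at the level of the approximation sign and does not quantify the deviation from exact collinearity.
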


\begin{proof}
Let $v_t(\theta) = \E_{x_t'\sim q_t}[g(x_t'; \theta)]$ be the mean of the gradients. By Proposition~\ref{prop:prop2}, $g(x_t; \theta) \approx c(x_t) v_t(\theta)$ for some scalar function $c(\cdot)$.
Then
\begin{equation}
    \label{eq:main_theorem_eigenvector}
    F_t(\theta)
    = \E_{x_t'\sim q_t}\big[\,g(x_t'; \theta)\,g(x_t'; \theta)^\top \big]
    \approx \E_{x_t'\sim q_t}[c^2(x_t')] v_t(\theta) v_t(\theta)^\top
\end{equation}
Therefore, $F_t(\theta)$ is approximately rank-1 with eigenvector $v_t(\theta)=\mu_t(\theta)$ and eigenvalue $\E_{x_t'\sim q_t}[c^2(x_t')]$. Multiplying \ref{eq:main_theorem_eigenvector} by $\mu^\top$ on the left and $\mu$ on the right, we get 
\[\mu_t^\top(\theta)F_t(\theta) \mu_t(\theta) \approx \E_{x_t'\sim q_t}[c^2(x_t')] \|\mu_t(\theta)\|^4 \implies \E_{x_t'\sim q_t}[c^2(x_t')] \approx \frac{\mu_t(\theta)^\top F_t(\theta)\,\mu_t(\theta)}{\|\mu_t(\theta)\|^4} 
\]

\end{proof}
\vspace{-15px}

\subsection{Empirical validation}
\label{sec:validation}
\Cref{theorem:main} suggests that, under model convergence, diffusion models approximate a rank-1 Fisher information matrix for their gradients as the SNR decreases (i.e., as the forward process timestep increases). To empirically verify this result, we train a small diffusion model on MNIST~\citep{lecun1998mnist}, whose full Fisher matrix fits in GPU memory, and analyze its gradient behavior across timesteps $t = 100, 200, \dots, 900$. For each $t$, we sample $1024$ data points and compute their gradients.
Experiment and model details can be found in Appendix~\ref{app:addi_imp_det}.
\begin{wrapfigure}[16]{r}{0.3\textwidth} 

        \centering
        \includegraphics[width=\linewidth]{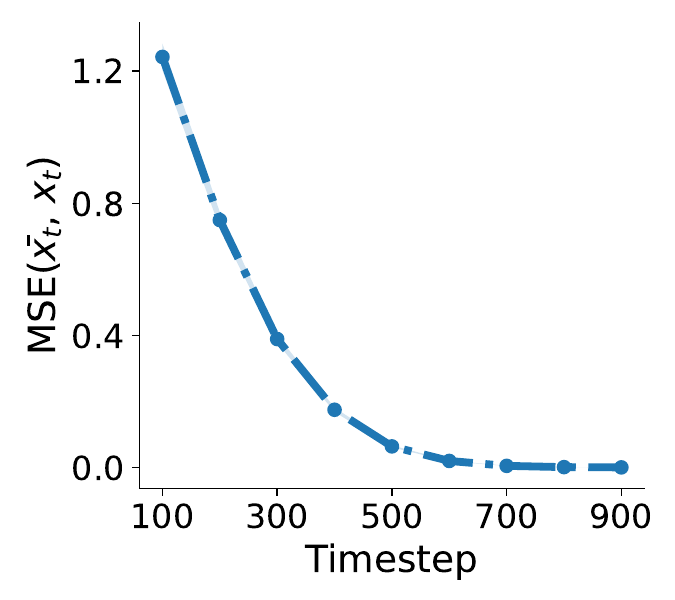} 
    \caption{MSE between model input $x_t$ and the scaled prediction $\hat{x_t}$ at each timestep.} 
    \label{fig:identity}
\end{wrapfigure}
\hspace{-3px}We first empirically validate \Cref{prop:prop1}, which states that when SNR is low, the denoising network behaves as a scaled identity map.
Let $\varepsilon_\theta(x_t;t)=\hat{\varepsilon}$ be the denoising network, and by Tweedie's identity \citep{efron2011tweedie}, $\hat{x_t}=\sqrt{1-\bar{\alpha_t}}\hat\varepsilon$.
Figure \ref{fig:identity} plots the mean-square error between input $x_t$ and prediction $\hat{x_t}$.
As timestep increases (SNR decreases), the model begins to perfectly predict its scaled input with error approaching 0.

\begin{figure*}[b!] 
    \centering

    \begin{subfigure}[b]{0.18\linewidth}
        \centering
        \includegraphics[width=\linewidth]{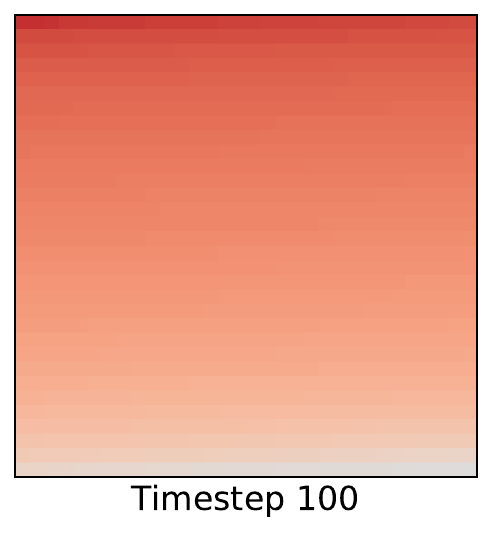} 
        \caption{}
    \end{subfigure}
    \hfill 
    \begin{subfigure}[b]{0.18\linewidth}
        \centering
        \includegraphics[width=\linewidth]{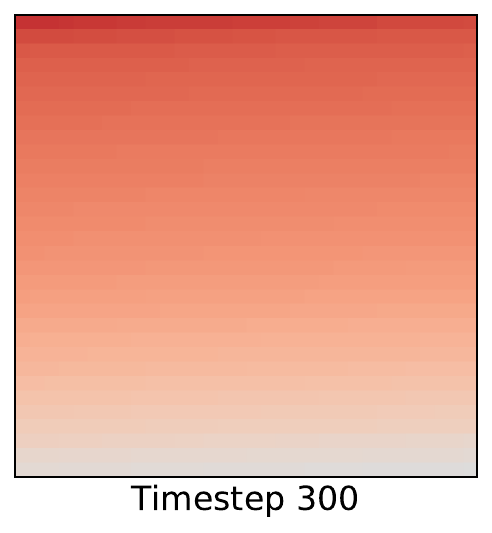} 
        \caption{}
    \end{subfigure}
    \hfill 
    \begin{subfigure}[b]{0.18\linewidth}
        \centering
        \includegraphics[width=\linewidth]{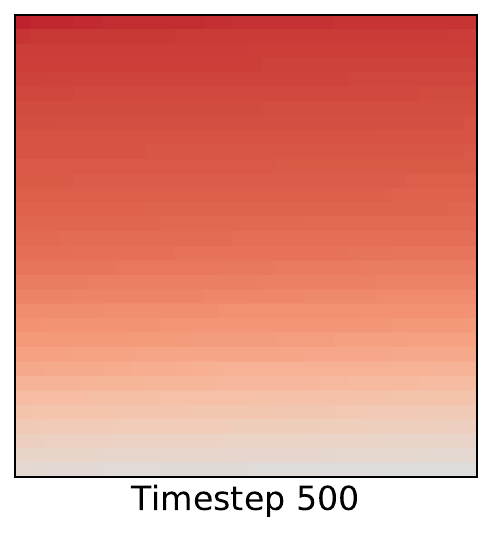} 
        \caption{}
    \end{subfigure}
    \hfill 
    \begin{subfigure}[b]{0.18\linewidth}
        \centering
        \includegraphics[width=\linewidth]{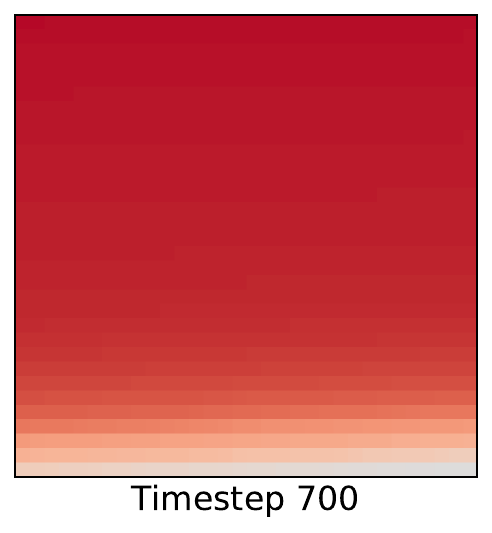} 
        \caption{}
    \end{subfigure}
    \hfill 
    \begin{subfigure}[b]{0.208\linewidth}
        \centering
        \includegraphics[width=\linewidth]{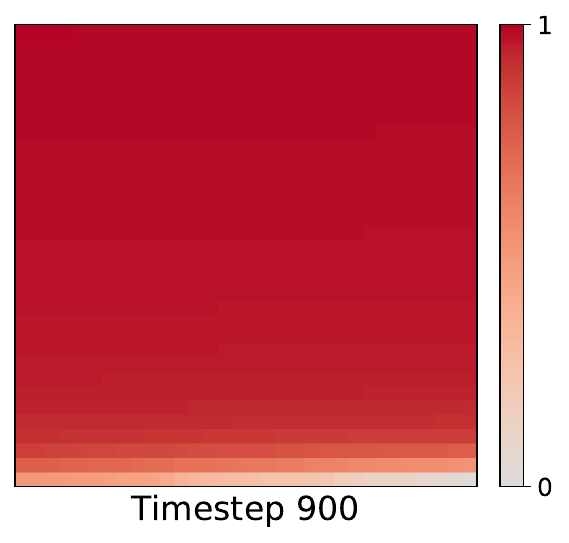} 
        \caption{}
    \end{subfigure}
    \vspace{-10px}  
    \caption{Absolute cosine similarities between per-sample gradient $g(\theta;x_t)$ and their expectation $\mu(\theta)$ at different diffusion timesteps. Each pixel represents a per-sample similarity. Higher values (deeper red) indicate stronger collinearity with $\mu(\theta)$.}
    \label{fig:g-aligns-mu} 
\end{figure*}

We then empirically validate \Cref{prop:prop2} that the per-sample gradients $g(\theta;x_t)$ are collinear with their mean $\mu_t(\theta)=\E_{x_t'\sim q_t}[g(\theta;x_t')]$ at low SNR when the model converges. 
For each gradient, we compute the absolute cosine similarity to the population mean.
Figure \ref{fig:g-aligns-mu} shows these similarities at selected timesteps where each pixel represents a per-sample similarity.
We find that per-sample gradients are mostly collinear with their mean at each timestep, with stronger collinearity in the mid-to-late timesteps (i.e. deeper red). 
This also provides empirical support for \Cref{assump:unet-trivial} that $s_\theta(x_t,t)\approx x_t\theta$ such that the per-sample gradient becomes a scaled directional change in $\Delta\theta=\theta-\gamma_tI$ that is not dependent on $x_t$. 
We additionally plot the pairwise cosine similarities of $\mu_t(\theta)$ across timesteps in Figure \ref{fig:cos-mu}.
We find that the $\mu_t(\theta)$ for different timesteps highly align with each other.
This suggests a practical benefit where we can Monte-Carlo sample timesteps instead of constructing a separate Fisher $F_t(\theta)$ at each timestep.
{\captionsetup[subfigure]{labelformat=parens,labelsep=space} 
\begin{figure}[!b] 
  \centering

  \begin{subfigure}[t]{0.26\textwidth}
    \vspace{0pt}\centering
    \includegraphics[width=\linewidth]{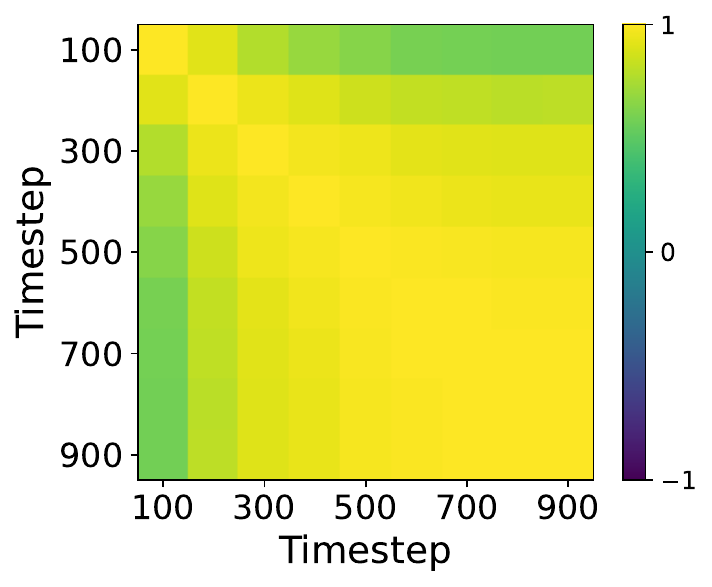}
    \caption{}
    \label{fig:cos-mu}
  \end{subfigure}\hfill
  \begin{subfigure}[t]{0.24\textwidth}
    \vspace{0pt}\centering
    \includegraphics[width=\linewidth]{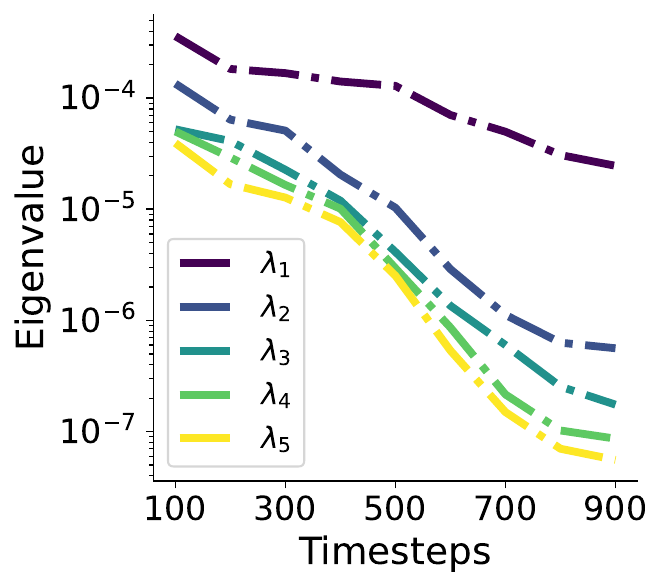}
    \caption{}
    \label{fig:top-eigs}
  \end{subfigure}\hfill
  \begin{subfigure}[t]{0.24\textwidth}
    \vspace{0pt}\centering
    \includegraphics[width=\linewidth]{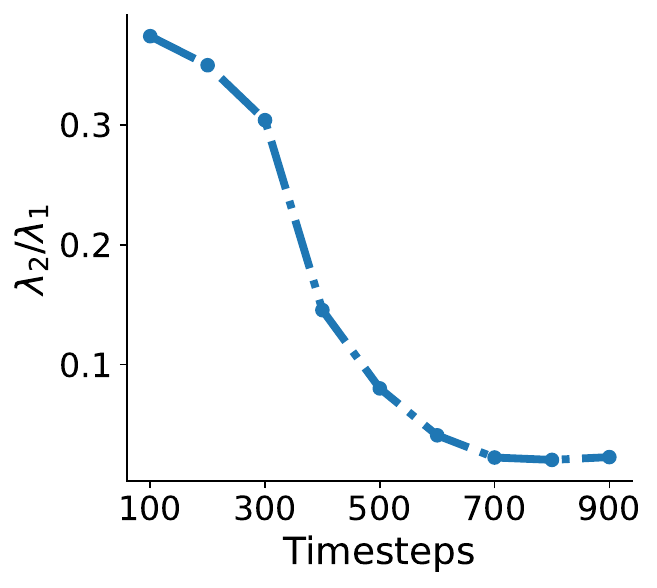}
    \caption{}
    \label{fig:eig-ratio}
  \end{subfigure}\hfill
  \begin{subfigure}[t]{0.24\textwidth}
    \vspace{0pt}\centering
    \includegraphics[width=\linewidth]{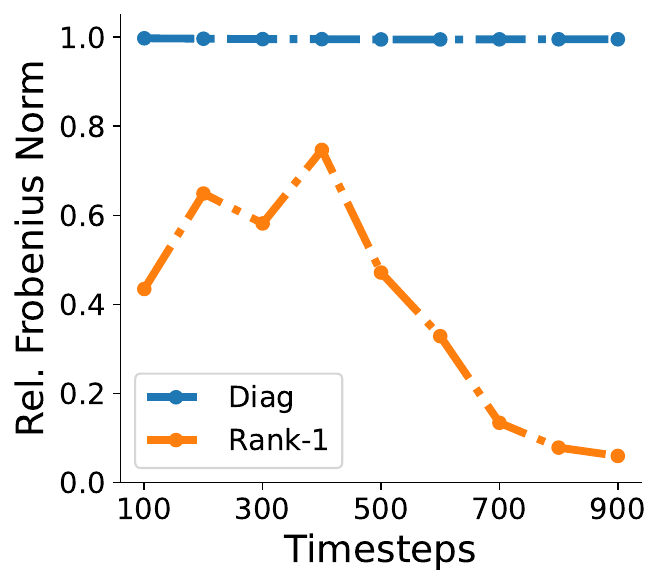}
    \caption{}
    \label{fig:f-norms}
  \end{subfigure}
  \vspace{-10px}
  \caption{(a): Pairwise cosine similarities of $\mu_t(\theta)$ across each forward process timestep. (b): Top 5 eigenvalues of $F_t(\theta)$ across timesteps in log-scale. (c): The ratio $r_t=\lambda_2/\lambda_1$ across timesteps. (d): Relative Frobenius norm between $F_t(\theta)$ and diagonal and rank-1 approximations across timesteps.}

\end{figure}



To probe the rank of the Fisher, we perform eigen decomposition on the empirical Fisher $F_t(\theta)$ from the gradients of the 1024 samples and collect top 5 eigenvalues $\lambda_1\geq\dots\geq\lambda_5$. 
If $F_t(\theta)$ is nearly rank-1, then $\lambda_1\gg\lambda_2$.
In Figure \ref{fig:top-eigs}, we plot the top 5 eigenvalues at each timestep in log-scale.
We observe that $\lambda_1$ is typically one or two orders of magnitude larger than the remaining eigenvalues and their overall magnitude decreases as timestep increases.
To quantify the dominance of the leading eigenvalue, we compute the ratio $r_t=\lambda_2/\lambda_1$.
Smaller $r_t$ indicates a larger eigengap and stronger rank-1 behavior.
Figure \ref{fig:eig-ratio} indicates that $\lambda_1\gg\lambda_2$ at all timesteps with the lowest ratio ($r_t=0.022$) achieved at $t=700$, suggesting a sharper single-eigenvalue dominance in the low SNR timesteps.

Since Fisher is empirically near rank-1, we compare two approximations at each timestep:
(i) the rank-1 reconstruction from Equation \ref{eq:main_theorem_eigenvector},
and (ii) the diagonal \(F^{\text{diag}}_{t}(\theta)\).
For each, we report the relative Frobenius error with respect to the full Fisher:
\(
\label{eq:rel-frob-error}
\operatorname{err}(\widehat F_t)
= \frac{\|F_t(\theta)-\widehat F_t(\theta)\|_{F}}{\|F_t(\theta)\|_{F}},
\:
\widehat F_t \in \{F^{\text{rank1}}_t,\,F^{\text{diag}}_t\}
\),
where $\|\cdot\|_F$ is the Frobenius norm.
Figure \ref{fig:f-norms} shows that rank-1 approximation achieves a lower error at the mid-to-late timesteps, suggesting that the rank-1 reconstruction is close to the true Fisher.
This result aligns with the above two empirical results that the rank-1 structure is clearer in the low SNR timesteps.
It is worth noting that the diagonal approximation yields an error near $1.0$ at every timestep. This behavior indicates that the Fisher information matrix in diffusion models has most of its curvature concentrated in the off-diagonal terms, making the diagonal approximation particularly inadequate.

Our empirical results collectively suggest that the per-sample gradients are mostly collinear with their mean as SNR decreases and hence approximates a rank-1 structure that captures most of the curvatures in Fisher where a diagonal approximation fails.

\subsection{EWC penalty with rank-1 Fisher}
\label{sec:new_loss}

With the rank-1 approximation using Equation~\ref{eq:main_theorem_eigenvector}, we derive a practical EWC penalty term without forming a full Fisher matrix at each timestep and only use the gradients from the model.
For simplicity of notation, let $g = g(\theta; x_t)$ and $\mu=\mu(\theta)=\E[g]$.
In implementation, this expectation is taken over the joint sampling process used during diffusion training: we draw a datapoint $x_0\sim p_{\mathrm{data}}$, sample a timestep $t\sim p(t)$ (uniform in our experiments unless stated otherwise), construct the noised input $x_t$ from $(x_0,t)$, and then compute $g(\theta;x_t,t)=\nabla_\theta \ell(\theta;x_t,t)$.
Thus, $\mu$ should be understood as $\mu(\theta)=\E_{x_0,t}\!\left[g(\theta;x_t,t)\right]$

This comes from the observation that the \textit{SNR} decreases sharply at early timesteps via Tweedie's identity, and therefore remains low for the majority of diffusion steps (see Figure~\ref{fig:identity}).
Consistent with this, the empirical Fisher is strongly dominated by its leading eigen-direction across timesteps. 
For instance, we observe $\lambda_2/\lambda_1 < 0.4$ already at $t=100$ in Figure~\ref{fig:eig-ratio}, and the mean gradient at different timesteps is highly collinear with one another in Figure~\ref{fig:cos-mu}.
This indicates that the rank-1 approximation captures a large portion of the curvature even outside the latest timesteps.
Consequently, averaging gradients over timesteps yields a practical surrogate that dilutes potentially residual higher-rank structure from the earliest steps and is dominated by the nearly rank-1 gradients from the remaining timesteps. 
With
\(
F(\theta) = \E\big[gg^\top\big],
\)
we have
\begin{align}
\mu^\top F(\theta)\mu
&= \mu^\top \E\big[gg^\top\big]\mu
= \E\big[(\mu^\top g)^2\big]
\quad\Rightarrow\quad
c^\star=\E[c^2_{\theta}(x_t)]= \frac{\mu^\top F(\theta)\mu}{\|\mu\|^4}
= \frac{\E\big[(\mu^\top g)^2 \big]}{\|\mu\|^4}.\nonumber
\end{align}
Thus, plugging in $c^\star$ and Equation \ref{eq:main_theorem_eigenvector} into Equation \ref{eq:ewc}, the EWC penalty with the rank-1 Fisher is:
\begin{equation}
\mathcal{L}_{\text{Rank-1}}(\theta)
~=~ \mathcal{L}_T(\theta)
\;+\; \frac{\lambda}{2}\sum_{k=1}^{T-1}
c_k^\star\,\big(\mu_k^\top(\theta-\theta_k^\star)\big)^2.
\end{equation}


\subsection{Promoting parameter sharing across tasks via generative distillation}
\label{sec:gd-parameter-sharing}
EWC is most effective when the task optima lie in a shared parameter subspace, so that its quadratic penalty can steer gradient descent to a region that remains good for all tasks. In overparameterized models, however, different tasks can converge to disjoint basins, in which case no single parameter vector is simultaneously optimal, and the EWC penalty alone may fail. 

To mitigate this limitation and strengthen our evaluation, we promote parameter sharing by adding a generative distillation term computed on replayed inputs from earlier tasks \citep{Masip2025GenDistill}. Concretely, we keep a frozen teacher model $\varepsilon_{\theta_{T-1}^\star}$ from the previous task and sample replay inputs $\tilde{x}$ from it. 
Generative distillation then encourages the current model $\varepsilon_\theta$ to match the teacher model’s denoising behavior:
\begin{equation}
\label{eq:gd}
\mathcal{L}_{\mathrm{GD}}(\theta)
~=~ \E_{\tilde{x}\sim \tilde{\mathcal{D}}}\!\left[
\frac12\big\|\varepsilon_{\theta}(\tilde{x}) - \varepsilon_{\theta^\star_{T-1}}(\tilde{x})\big\|^2_2
\right]\nonumber,
\end{equation}
where $\tilde{\mathcal{D}}$ denotes the replay distribution.
Our full objective becomes:
\begin{equation}
\label{eq:ewc+gd}
\mathcal L_{\text{total}}(\theta)=\mathcal{L}_{\text{Rank-1}}(\theta) +\mathcal L_{\text{GD}}(\theta)
\end{equation}
Intuitively, the generative distillation term pulls $\varepsilon_\theta$ to remain compatible with past task behaviors on their input manifolds, guiding gradient descent toward regions that overlap with previous optima, thereby complementing EWC’s curvature-based constraint.

\section{Continual learning with rank-1 Fisher}
\label{sec:exp}
As the Fisher of diffusion models is approximately rank-1, we validate the effectiveness of our findings on class-incremental continual learning tasks. 

\subsection{Datasets for class-incremental continual learning}
\label{sec:datasets}
In class-incremental continual learning, a dataset is partitioned into $T$ tasks, where each task $T_k$ contains $n$ class labels and $\cap T_k=\emptyset$.
We evaluate our approach on four image datasets commonly used in generative modeling: MNIST \citep{lecun1998mnist}, Fashion MNIST (FMNIST) \citep{xiao2017fashion}, CIFAR-10 \citep{krizhevsky2009learning}, and ImageNet-1k \citep{imagenet}.
We use the down-sampled ImageNet-1k \citep{chrabaszcz2017imagenet} such that each image has the dimension of $3\times32\times32$ for faster training and evaluation while preserving similar performance characteristics to the full-size ImageNet-1k.
We additionally pad MNIST and FMNIST to $32\times32$ to be consistent with CIFAR-10 and down-sampled ImageNet-1k. 
For each of MNIST, FMNIST, and CIFAR-10, we partition the dataset into 5 tasks with 2 classes per task.
For ImageNet-1k, we partition it into 20 tasks with 50 classes per task, simulating a much longer horizon continual learning task.
We use the same class label ordering as in the original dataset.
We refer to Appendix~\ref{app:datasets} for additional details.


\subsection{Baselines and metrics}
We compare our proposed EWC approach with a rank-1 Fisher approximation (\text{Rank-1}) against the widely used diagonal Fisher approximation (\text{Diag}), with both methods augmented by generative distillation as suggested in Section~\ref{sec:gd-parameter-sharing}.
For the ablation study, we compare both EWC approaches without generative distillation.
In addition, we evaluate the generative distillation only (\text{GD}) approach to assess whether EWC provides complementary benefits beyond distillation. 
Finally, we compare these approaches to the non-continual learning setting, which serves as an upper bound for the performance. 
We hypothesize that with parameter sharing across tasks encouraged by generative distillation, a better Fisher approximation approach will lead to better continual learning performance and will complement generative distillation.

\textbf{Metrics. }  
To evaluate continual learning performance for generative models, we compute the Fréchet Inception Distance \cite[FID;][]{fid_score} between generated samples and each task's held-out test set. 
Let $m_k$ be the model after training on tasks $1{:}k$, and let $\mathrm{FID}_i(m_k)$ denote the FID on task $i$’s test set when evaluated with $m_k$.
We report two primary metrics:
\vspace*{-5pt}
\begin{itemize}[leftmargin=*]
    \setlength\itemsep{-0.1em}
    \item \textbf{Average FID through task $k$.} 
    \(
    \mathcal{A}\mathrm{FID}@k \;=\; \frac{1}{k}\sum_{i=1}^{k}\mathrm{FID}_i(m_k),
    \)
    so $\mathcal{A}\mathrm{FID}@T$ summarizes overall performance at the end of training.
    \item \textbf{Final average forgetting.}
    \(
    \mathcal{F} \;=\; \frac{1}{T}\sum_{k=1}^{T}\Big(\mathrm{FID}_k(m_T)-\mathrm{FID}_k(m_k)\Big),
    \)
    the mean change in each task’s FID immediately after it is learned ($m_k$) as compared to after training on all tasks ($m_T$).
\end{itemize}


\subsection{Implementation details}
We use the label conditioning UNet implementation from \textit{Huggingface} with default hyper-parameters as our denoising network. We use 4 ResNet blocks with $128$ output channels in the first down-sample block and $256$ output channels in the rest.
For sampling, we use a DDIM scheduler with 50 sampling steps and 1000 noising steps.
We train our UNet with 100 epochs on ImageNet-1k and 200 epochs on the rest of the datasets for each task with a batch size of 128 and a learning rate of $2\times10^{-4}$ using Adam \citep{kingma2017adammethodstochasticoptimization}.
We use an EWC penalty weight of $15000$ for all tasks.
For generative distillation, we create a replay buffer of 1300 images per class for ImageNet-1k and 5000 images per class for other datasets.
We refer to Appendix~\ref{app:addi_imp_det} for additional training details.

\section{Results and Discussions}

\subsection{Continual learning performance}

\paragraph{EWC complements generative distillation.}
\newcolumntype{L}{>{$}l<{$}}
\newcolumntype{C}{>{$}c<{$}}
\newcolumntype{R}{>{$}r<{$}}

\begin{table}[htbp]
\centering
\scriptsize
\setlength{\tabcolsep}{6pt}
\renewcommand{\arraystretch}{1.2}
\captionsetup{skip=8pt}
\caption{Average FID at the final task and average forgetting across methods and datasets. Standard errors are reported over 3 random seeds.}
\label{tab:fid_forgetting_combined}
\begin{tabular}{%
l
cc
cc
cc
cc}
\toprule
\multicolumn{1}{l}{\textbf{{\scriptsize Methods}}} & \multicolumn{2}{c}{\textbf{\scriptsize MNIST}} & \multicolumn{2}{c}{\textbf{\scriptsize FMNIST}} & \multicolumn{2}{c}{\textbf{\scriptsize CIFAR-10}} & \multicolumn{2}{c}{\textbf{\scriptsize ImageNet-1k}} \\
\cmidrule(lr){2-3}
\cmidrule(lr){4-5}
\cmidrule(lr){6-7}
\cmidrule(lr){8-9}
\textbf{} & {\scriptsize${\huge\mathcal{A}\mathrm{FID}}$}$\downarrow$  & {\scriptsize$\mathcal{F}$}$\downarrow$  & {\scriptsize${\huge\mathcal{A}\mathrm{FID}}$}$\downarrow$ & {\scriptsize$\mathcal{F}$}$\downarrow$  & {\scriptsize${\huge\mathcal{A}\mathrm{FID}}$}$\downarrow$  & {\scriptsize$\mathcal{F}$}$\downarrow$ & {\scriptsize${\huge\mathcal{A}\mathrm{FID}}$}$\downarrow$  & {\scriptsize$\mathcal{F}$}$\downarrow$  \\
\midrule
\scriptsize Non-continual   & 2.6$_{\pm 0.1}$ & -- & 5.7$_{\pm 0.8}$ & -- & 23.3$_{\pm 0.7}$ & -- & 11.7$_{\pm 0.1}$ & --\\
\midrule
\scriptsize$\text{Diag}_{\text{w/o}\:\text{GD}}$   & 62.2$_{\pm 2.9}$ & 51.1$_{\pm 4.2}$ & 99.9$_{\pm 3.5}$ & 81.7$_{\pm 4.7}$ & 128.6$_{\pm 4.6}$ & 74.4$_{\pm 3.5}$ & 86.1$_{\pm 4.2}$ & 34.2$_{\pm 3.6}$\\
\scriptsize$\text{Rank-1}_{\text{w/o}\:\text{GD}}$  & 65.2$_{\pm 4.6}$ & 58.3$_{\pm 4.4}$ & 96.9$_{\pm 3.2}$ & 82.1$_{\pm 3.5}$ & 120.0$_{\pm 10.2}$ & 77.4$_{\pm 9.4}$ & 74.3$_{\pm 1.9}$ & 41.3$_{\pm 1.8}$\\
\midrule
\scriptsize\text{GD}            & 10.1$_{\pm 0.9}$ & 2.3$_{\pm 0.8}$ & 19.1$_{\pm 0.9}$ & 3.9$_{\pm 0.5}$ & 61.2$_{\pm 3.2}$ & 16.6$_{\pm 0.6}$ & 69.0$_{\pm 2.2}$ & 46.2$_{\pm 12.9}$ \\
\midrule
\scriptsize\text{Diag}     & 14.3$_{\pm 1.3}$ & 5.2$_{\pm 1.2}$ & 27.7$_{\pm 2.2}$ & 9.1$_{\pm 2.7}$ & 72.6$_{\pm 3.2}$ & 17.9$_{\pm 1.8}$ & 73.8$_{\pm 2.8}$ & 25.8$_{\pm 9.4}$ \\
\scriptsize\textbf{Rank-1 (ours)}    & \textbf{7.6$_{\pm 0.1}$}  & \textbf{0.6$_{\pm 0.1}$} & \textbf{15.4$_{\pm 0.6}$} & \textbf{0.9$_{\pm 0.3}$} & \textbf{50.5$_{\pm 1.2}$} & \textbf{7.4$_{\pm 1.2}$}  & \textbf{48.5$_{\pm 1.9}$} & \textbf{15.2$_{\pm 4.8}$} \\
\bottomrule
\end{tabular}
\end{table}

We report the average FID at the final task as a comprehensive measure of performance and forgetting for each dataset and method in \Cref{tab:fid_forgetting_combined}.
Our results show that without generative distillation, EWC alone struggles to maintain a shared optimum across tasks, leading to degraded continual learning performance on all datasets. 
The consistently high forgetting indicates that the EWC penalty pulls the model toward the previous task's optimum while moving it away from the current task's, suggesting little to no overlap between task optima.

Using generative distillation alone substantially improves both average FID and forgetting on all datasets compared to EWC-only by encouraging the model to move toward an optima that performs well across all (including replayed) tasks.
When combined with our rank-1 EWC, we observe further improvements on all datasets. 
On MNIST and FMNIST, catastrophic forgetting is nearly eliminated with forgetting $\mathcal F=0.6_{\pm0.1}$ and $\mathcal F=0.9_{\pm0.3}$, respectively.
On the long horizon ImageNet-1k dataset, rank-1 EWC more than halves forgetting relative to generative distillation alone ($\mathcal F=15.2_{\pm4.8}$ vs.\ $\mathcal F=46.2_{\pm12.9}$).
And across all datasets, image generation quality improves substantially, further narrowing the gap to the non-continual upper bound.
In contrast, while combining diagonal EWC with distillation improves the diagonal EWC-only approach, the improvement stems largely from distillation, and performance often matches or even degrades compared to distillation alone, suggesting the ineffectiveness of the diagonal EWC constraint.
Overall, these results demonstrate that rank-1 EWC effectively complements generative distillation by enforcing parameter constraints within the shared optimum that distillation provides.
We provide additional analysis of low-rank Fisher estimation (ranks $1$–$5$) via power iteration, as well as continual fine-tuning baselines, in \Cref{app:lowrank_ft}.\vspace{-5px}

\paragraph{Rank-1 EWC is stable and robust on long horizon tasks.}
{\captionsetup[subfigure]{labelformat=parens,labelsep=space} 
\begin{figure}[!t] 
  \centering

  \begin{subfigure}[t]{0.25\textwidth}
    \vspace{0pt}\centering
    \includegraphics[width=\linewidth]{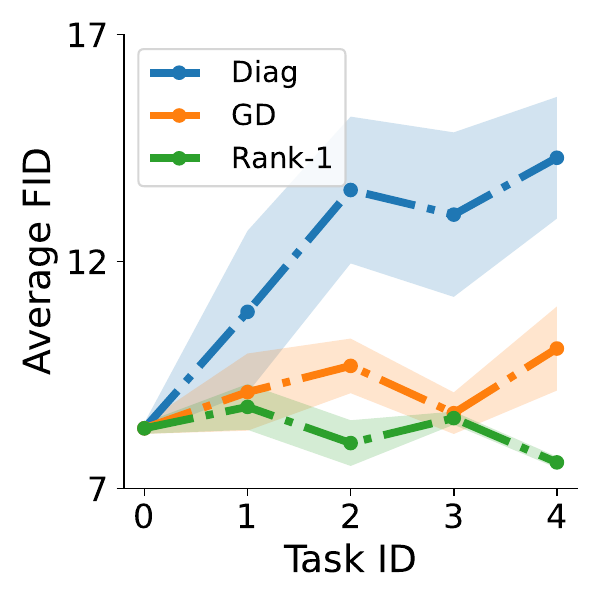}
    \caption{MNIST}
    \label{fig:mnist-curve}
  \end{subfigure}\hfill
  \begin{subfigure}[t]{0.25\textwidth}
    \vspace{0pt}\centering
    \includegraphics[width=\linewidth]{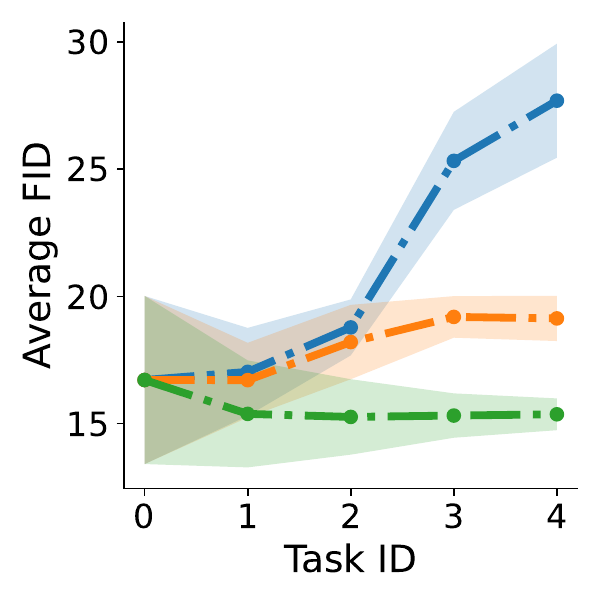}
    \caption{FMNIST}
    \label{fig:fmnist-curve}
  \end{subfigure}\hfill
  \begin{subfigure}[t]{0.25\textwidth}
    \vspace{0pt}\centering
    \includegraphics[width=\linewidth]{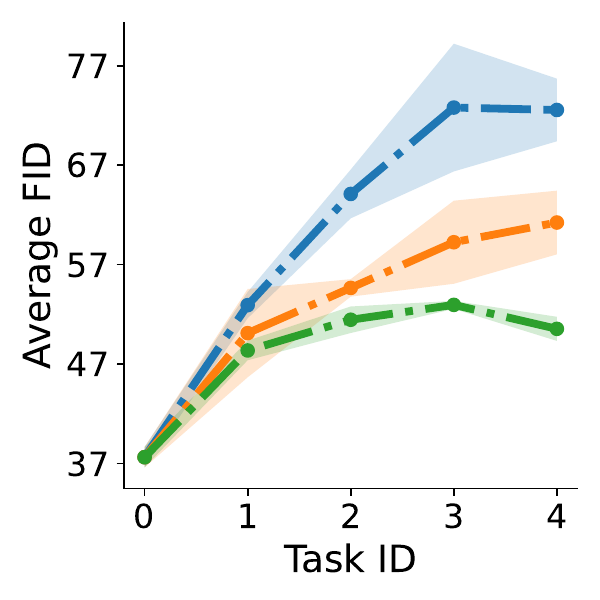}
    \caption{CIFAR-10}
    \label{fig:cifar-curve}
  \end{subfigure}\hfill
  \begin{subfigure}[t]{0.25\textwidth}
    \vspace{0pt}\centering
    \includegraphics[width=\linewidth]{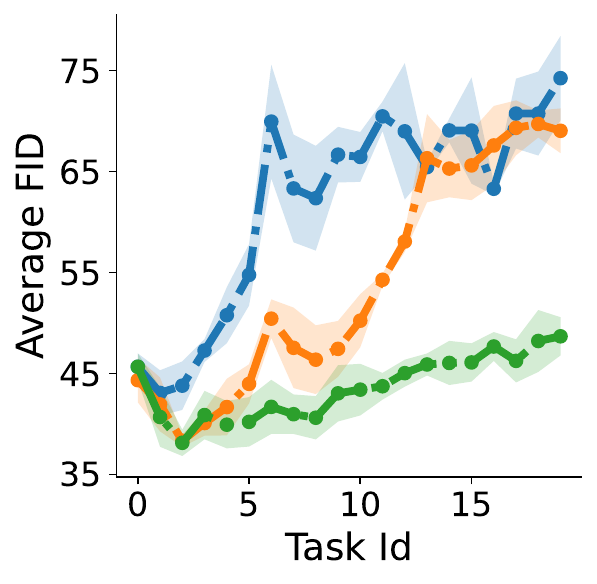}
    \caption{ImageNet-1k}
    \label{fig:im1k-curve}
  \end{subfigure}
  \caption{Average FID at each task during continual learning on evaluated datasets. Standard errors are averaged over 3 random seeds.\vspace{-10px}}
  \label{fig:curves}

\end{figure}

To study the learning dynamics during continual learning on evaluated datasets, we plot the average FID at each task in Figure \ref{fig:curves}.
Our results show that rank-1 EWC with distillation consistently reaches a lower average FID during learning on all datasets than distillation-only and the diagonal variant.
Surprisingly, on MNIST and FMNIST, average FID even decreases as the model learns new tasks. 
Given the near-zero positive forgetting from \Cref{tab:fid_forgetting_combined}, Figure \ref{fig:curves} suggests that the generation quality on some tasks improve when learning new tasks.
This improvement in generation quality on early trained tasks can also be found in CIFAR-10 where the average FID decreases at the final task.
This finding implies that our rank-1 EWC approach not only effectively constrained the model updates to preserve knowledge on the old tasks, but also refining old knowledge based on new tasks.

In addition, on the long horizon ImageNet-1k dataset, both generative distillation-only and the diagonal variant begin to diverge around task 10 while the average FID only gradually increases for our rank-1 approach.
In particular, the generative distillation-only approach reached a plateau around task 6 to 11 before diverging.
This suggests that distillation suffers from distribution shift due to errors that have accumulated from the previous imperfect replay samples, such that the model optima moves away from early tasks.
We refer \Cref{appendi:more-curves} for additional results.\vspace{-5px}

\subsection{Qualitative analysis}

{\captionsetup[subfigure]{labelformat=parens,labelsep=space} 
\begin{figure}[!t] 
  \centering
    \begin{subfigure}[t]{1.0\textwidth}
    \vspace{0pt}\centering
    \includegraphics[width=\linewidth]{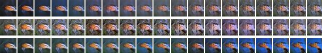}
    \caption{Hornbill}
    \label{fig:hornbill}
  \end{subfigure}\hfill
  \begin{subfigure}[t]{1.0\textwidth}
    \vspace{0pt}\centering
    \includegraphics[width=\linewidth]{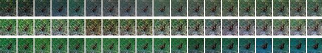}
    \caption{Ruffed grouse}
    \label{fig:bird}
  \end{subfigure}\hfill
  \begin{subfigure}[t]{0.49\textwidth}
    \vspace{0pt}\centering
    \includegraphics[width=\linewidth]{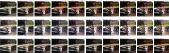}
    \caption{Convertible}
    \label{fig:car}
  \end{subfigure}\hfill
  \begin{subfigure}[t]{0.49\textwidth}
    \vspace{0pt}\centering
    \includegraphics[width=\linewidth]{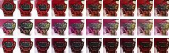}
    \caption{Digital Watch}
    \label{fig:watch}
  \end{subfigure}
  \caption{Examples of generated images from selected classes in ImageNet-1k over continual learning tasks. (a) Hornbill class sampled from models trained on task 1 to 19. (b) Ruffed grouse class from task 1 to 19. (c) Convertible class from task 10 to 19. (d) Digital watch class from task 10 to 19. Top row: generative distillation-only; middle row: diagonal; bottom row: rank-1.\vspace{-10px}}
  \label{fig:progression}

\end{figure}



Figure~\ref{fig:progression} illustrates ImageNet-1k examples: the hornbill and the ruffed grouse classes sampled from models trained on task 1-19 (\Cref{fig:hornbill,fig:bird}), the convertible and the digital watch classes sampled from models trained on tasks 10–19 (~\Cref{fig:car,fig:watch}).
Our results show that both generative distillation-only (top row) and the diagonal variant (middle row) progressively generate a noisier sample as continual learning progresses to later tasks. 
On the other hand, rank-1 EWC with generative distillation maintains the sharpness of the objects.
For example, both ruffed grouse and digital watch remain intact while other approaches almost distort the central object in the later tasks.
Our results show that the proposed rank-1 method consistently preserves image quality across tasks, whereas images generated by the diagonal variant and generative distillation-only progressively become noisy and less recognizable.\vspace{-5px}
\section{Conclusion, limitations, and future work}

In this paper, we investigate gradients in diffusion models and show that per-sample gradients become approximately collinear with their population mean when the SNR is low, inducing an effective rank-1 Fisher information matrix. 
Leveraging this structure, we hypothesized and validated that a rank-1 Fisher provides a better approximation for EWC than the commonly used diagonal Fisher approximation.
When paired with generative distillation, which encourages cross-task parameter sharing that EWC assumes, our method improves continual learning performance.
On class-incremental image generation, rank-1 EWC with generative distillation outperforms both generative distillation-only and diagonal Fisher EWC across MNIST, FMNIST, CIFAR-10, and ImageNet-1k in terms of lower average FID and reduced forgetting. 
In particular, forgetting is nearly eliminated on MNIST and FMNIST and is roughly halved on ImageNet-1k, the longest-horizon setting, relative to generative distillation-only. 
These findings indicate that diffusion models admit an effective rank-1 Fisher; with this better Fisher approximation, EWC complements replay by constraining replay-induced distribution drift toward a shared parameter region that supports all tasks.
\vspace{-10pt} 
\paragraph{Limitations and future work.}
While the main contribution of our work lies in analyzing the gradient geometry of the diffusion model’s mathematical framework, an important direction for future work is to conduct a more in-depth empirical and theoretical study of \cref{assump:unet-trivial} across multiple model architectures.
Such an analysis would help clarify which architectural components (e.g., skip connections, attention mechanisms, etc.) or training regularizations (e.g., L2 norms) make \cref{assump:unet-trivial} more or less likely to hold.

\section*{Reproducibility statement}
We used open-source datasets and model implementations as described in \Cref{sec:validation} and \Cref{sec:exp}, and additional experiment details, including empirical validation experiment implementations, class-incremental dataset splits, and hyper-parameters in \Cref{app:addi_imp_det,app:datasets}.



\bibliography{iclr2026_conference}
\bibliographystyle{iclr2026_conference}

\appendix
\section{The Use of Large Language Models}
We primarily use LLMs to improve wording and sentence structure, check grammar, and reorganize or document code.
\section{Additional proofs}

\subsection{Proof for proposition 1}
\label{proof:1}
\begin{proof}
    By Tweedie's identity, write $s_t^\star(x_t)=\frac{\sqrt{\bar{\alpha_t}}}{1-\bar{\alpha_t}}\E [x_0\mid x_t]-x_t/(1-\bar{\alpha_t})$.
    As $\bar{\alpha_t}$ decreases with timestep due to DDPM's linear noise scheduler, SNR$=\frac{\sqrt{\bar{\alpha_t}}}{1-\bar{\alpha_t}}$ decreases with $t$. And because $\frac{\sqrt{\bar{\alpha_t}}}{1-\bar{\alpha_t}}$ decreases faster than $1/(1-\bar{\alpha_t})$, $s_t^\star(x_t)\approx -x_t/(1-\bar{\alpha_t})$. 
\end{proof}

\subsection{Proof for proposition 2}
\label{proof:2}
\begin{proof}
    Let $g(\theta;x_t)=\nabla_\theta\mathcal L_{\text{DSM}}(\theta;x_t)$, $\mu(\theta)=\E_{x_t'\sim q_t}\big[g(\theta;x_t')\big]$.
    By \Cref{prop:prop1}, $s^\star(x_t)\approx x_t\gamma_t$, where $\gamma_t=-1/(1-\bar{\alpha_t})$ is a scalar independent of $x_t$.
    Near convergence, the denoising network approximates the true score function: $s_\theta(x_t,t)\approx s^\star(x_t)\approx x_t\gamma_t$.
    By \Cref{assump:unet-trivial}, $s_\theta(x_t,t)$ takes a linear form as $s_\theta(x_t,t)=A_\theta x_t$.
    Substitute back to $g(\theta;x_t):$
    \begin{align}
        g(\theta;x_t) &= \nabla_\theta\left(\frac{1-\bar{\alpha_t}}{2}\big\|s_\theta(x_t,t)-s^\star(x_t)\big\|^2_2\right)\\
        &=\nabla_\theta\left(\frac{1-\bar{\alpha_t}}{2}\big\|A_\theta x_t-x_t\gamma_t\big\|^2_2\right)\\
        &=(1-\bar{\alpha_t})\big(A_\theta x_t-x_t\gamma_tI\big)x_t^\top\\
        &=(1-\bar{\alpha_t})\|x_t\|^2(A_\theta-\gamma_tI).
    \end{align}
    Let $c(x_t)=(1-\bar{\alpha_t})\|x_t\|^2$ be a scalar function dependent on $x_t$, and $v=A_\theta-\gamma_tI$ be a vector in parameter space doesn't depend on $x_t$.
    Then $g(\theta;x_t)=c(x_t)v$.
    By the definition of collinearity, $g(\theta;x_t)$ collinear with $v$ for any $x_t\sim q_t$.
    And since per-sample gradient collinear with each other, it collinear with the population mean $\mu(\theta)$.
\end{proof}

\section{Additional discussion on Assumption 1}
\label{app:assump1}

The theoretical rationale behind \cref{assump:unet-trivial} comes from the analysis of autoencoders. Let $f_\theta$ be an autoencoder trained with the standard MSE objective to approximate the identity mapping, i.e., $f_\theta(x)\approx x$. As stated in ~\cref{prop:prop1}, in the late diffusion steps the denoising network behaves approximately like such an autoencoder $f_\theta$.

\cite{JMLR:v11:vincent10a} report that an \emph{autoencoder with MSE} is doing PCA-like operation when there is no nonlinearity. When nonlinearities are introduced, they argue that the learned representation is still ``likely'' to lie in a PCA subspace. For alternative reconstruction losses such as cross-entropy, \cite{JMLR:v11:vincent10a} note that the autoencoder is no longer the same as that of PCA and will likely learn different features.
In our setting, this suggests that the observed approximately rank-1 behavior is primarily a consequence of the specific DDPM parameterization: a non-parameterized, variance-preserving forward process that reduces the ELBO to a simple MSE objective.
In particular, this rank-1 behavior need not hold when the training objective is not purely MSE.

To illustrate this, we also experimented with a VAE \citep{kingma2014autoencoding} trained with an $\mathrm{MSE}+\mathrm{KL}$ objective. To quantify rank-1 dominance, we measure the percentage of total variance explained by the largest eigenvalue of the empirical Fisher information matrix:
\[
\frac{\lambda_1}{\sum_i \lambda_i}.
\]
When analyzing the gradients of this VAE, we found that the top eigenvalue accounts for only about $50\%$ of the total variance. In contrast, for DDPM in late timesteps, the top eigenvalue explains roughly $99\%$ of the variance.

Moreover, when we down-weight the KL term by a factor of $10^{-3}$, the top eigenvalue in the VAE setting explains about $85\%$ of the variance, suggesting that the near rank-1 behavior is closely tied to the autoencoding regime with an MSE-dominated loss.

This is consistent with our theoretical analysis: even if $f_\theta(x)\approx A_\theta x$ as it works in a PCA subspace for some affine parameter $A_\theta$, the gradient of a combined $\mathrm{MSE}+\mathrm{KL}$ objective cannot in general be factored as $\nabla_\theta \mathcal{L}(x)=c(x)\,v$, where $c(x)$ is a scalar depending on $x$ and $v$ is a parameter-space vector independent of $x$. Such a factorization is precisely the condition for gradient collinearity established in Appendix~\ref{proof:2}.

Although a rigorous mathematical proof of this behavior for large-scale nonlinear autoencoder architectures is currently out of reach, we assume that the skip connections in the U-Net \citep{ronneberger2015unetconvolutionalnetworksbiomedical} make it even more likely that the large-scale network effectively operates in such a PCA-like subspace \citep{pmlr-v97-ghorbani19b}. Intuitively, these skip connections allow information to bypass some nonlinear activations, thereby preserving approximately linear structure in the mapping, in line with the observations of \cite{JMLR:v11:vincent10a}.
\section{Additional Empirical Results}
\label{app:addi_imp_det}

This section provides implementation details for the empirical study presented in the main text.


We use the label-conditioned \texttt{UNet} implementation from \textit{HuggingFace} as the denoising backbone.  
The base model consists of three ResNet blocks per downsampling stage, each with $16$ output channels and two layers per block.  

Training is performed with a batch size of $128$ and learning rate $2\times 10^{-4}$ using the Adam optimizer \citep{kingma2017adammethodstochasticoptimization}.  
Each task is trained for $200$ epochs on the full dataset.  
On MNIST \citep{lecun1998mnist}, training the base model required approximately $97$ minutes on a single Nvidia A100 (40GB) GPU.
\subsection{Additional Model Variants}
To assess whether the observed Fisher structure depends on model capacity, we trained two reduced variants of the UNet under identical hyperparameters:
\begin{enumerate}
    \item \textbf{Small-1:} one ResNet block with $16$ output channels and a single layer per block;
    \item \textbf{Small-3:} three ResNet blocks with $16$ output channels each, but only one layer per block.
\end{enumerate}

Both models were trained for $200$ epochs per task with batch size $128$ and learning rate $2\times 10^{-4}$.  
On MNIST, training required $\sim 56$ minutes for Small-1 and $\sim 78$ minutes for Small-3, compared to $97$ minutes for the base model.

While all variants display the same qualitative trends—eigenspectrum decay (\Cref{fig:top-eigs-small,fig:top-eigs-medium}) and dominance of a single eigenvalue (\Cref{fig:eig-ratio-small,fig:eig-ratio-medium})—we observe that the empirical rank-1 behavior becomes more pronounced as model size increases. This suggests that our theoretical predictions are not artifacts of small networks, but rather strengthen with scale. Theoretically, this follows from our optimality assumption: larger models are expected to achieve solutions closer to the optimal point, thereby aligning more closely with the conditions under which the Fisher reduces to a rank-1 structure. Consequently, one should expect the rank-1 Fisher approximation to hold even more robustly in larger, real-world diffusion models.

{\captionsetup[subfigure]{labelformat=parens,labelsep=space} 
\begin{figure}[h!] 
  \centering
  \begin{subfigure}[t]{0.24\textwidth}
    \vspace{0pt}\centering
    \includegraphics[width=\linewidth]{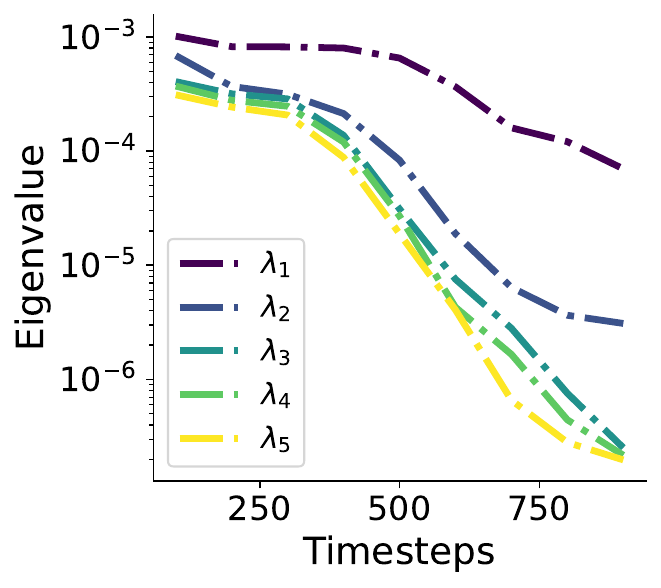}
    \caption{}
    \label{fig:top-eigs-small}
  \end{subfigure}\hfill
  \begin{subfigure}[t]{0.24\textwidth}
    \vspace{0pt}\centering
    \includegraphics[width=\linewidth]{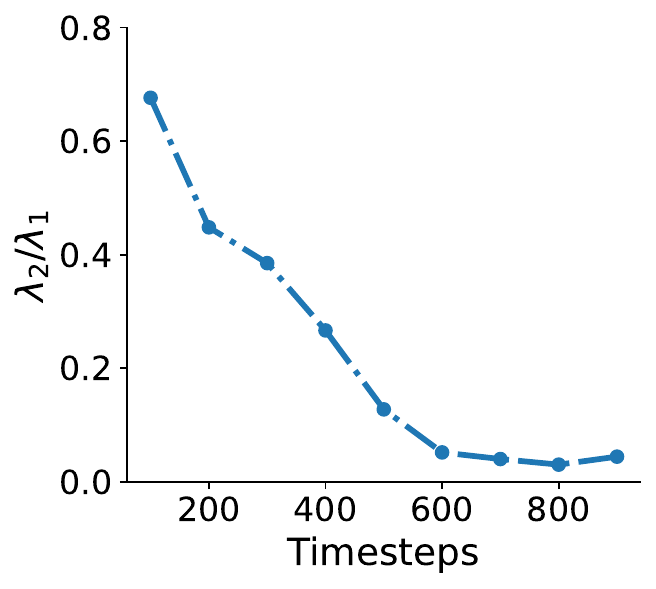}
    \caption{}
    \label{fig:eig-ratio-small}
  \end{subfigure}
  \begin{subfigure}[t]{0.24\textwidth}
    \vspace{0pt}\centering
    \includegraphics[width=\linewidth]{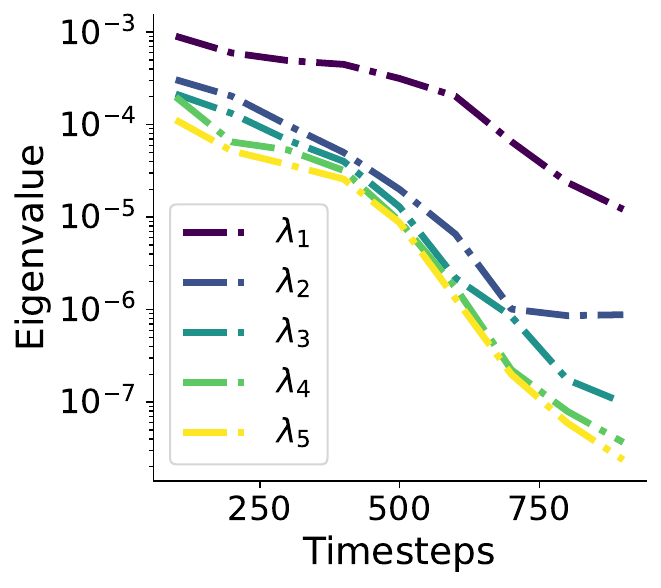}
    \caption{}
    \label{fig:top-eigs-medium}
  \end{subfigure}\hfill
  \begin{subfigure}[t]{0.24\textwidth}
    \vspace{0pt}\centering
    \includegraphics[width=\linewidth]{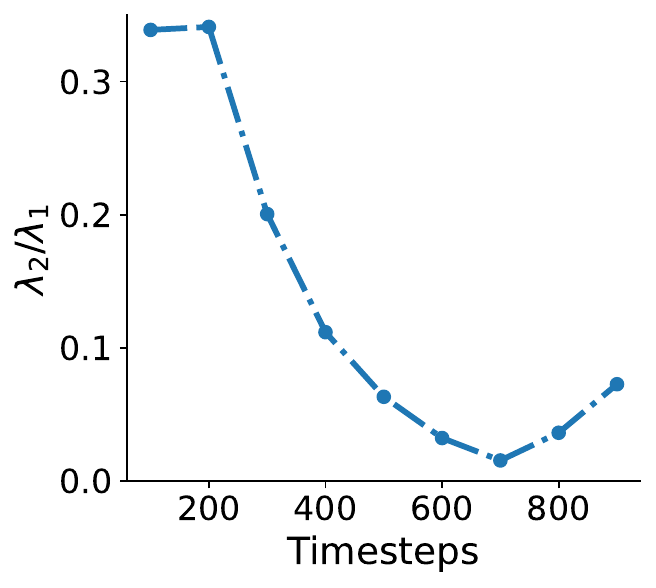}
    \caption{}
    \label{fig:eig-ratio-medium}
  \end{subfigure}
  \vspace{-10px}
    \caption{Eigenvalue analysis of the empirical Fisher for reduced UNet variants.  
    Panels (a) and (b) show the top 5 eigenvalues (log-scale) and eigengap ratios $r_t=\lambda_2/\lambda_1$ across timesteps for the \textbf{Small-1} model, while panels (c) and (d) report the same quantities for the \textbf{Small-3} model.  
    In both cases, the Fisher remains nearly rank-1, with the leading eigenvalue $\lambda_1$ dominating and the eigengap widening as model size increases.}

\end{figure}

\section{Additional Continual Learning Results}
\subsection{Additional Implementation Details}
We use the label-conditioned UNet implementation from \textit{Huggingface} as our denoising network. The UNet employs three ResNet blocks with $16$ output channels in each downsampling block.
For training, we use a batch size of $128$ and a learning rate of $2\times10^{-4}$ with the Adam optimizer \citep{kingma2017adammethodstochasticoptimization}, and train for 200 epochs per task for each dataset.

Training was performed on a single Nvidia A100 40GB GPU, taking a total of 157 minutes. 

\subsection{Additional implementation details for continual learning experiments}
We employ a label-conditioned UNet implementation from \textit{HuggingFace} as our denoising network, retaining the default hyper-parameters.  
The architecture consists of four ResNet blocks, with $128$ output channels in the first down-sampling block and $256$ output channels in the remaining blocks.  
For sampling, we use a DDIM scheduler with $50$ sampling steps and $1000$ noising steps.  

We summarize our training hyperparameters in Table~\ref{tab:config}. Unless otherwise stated, the same settings are used across all datasets and tasks.  
We also provide the average runtime (including FID evaluation) for each dataset and method in Table~\ref{tab:runtime_gpu}.

\begin{table}[h]
\centering
\caption{Training configurations used across datasets.}
\label{tab:config}
\begin{tabular}{ll}
\toprule
\textbf{Setting} & \textbf{Value} \\
\midrule
Optimizer & Adam \citep{kingma2017adammethodstochasticoptimization} \\
Learning Rate & $2 \times 10^{-4}$ \\
Batch Size & 128 \\
Training Epochs & 200 per task \\
EWC Penalty Weight & 15000 \\
Replay Buffer (ImageNet-1k) & 1300 images per class \\
Replay Buffer (others) & 5000 images per class \\
\bottomrule
\end{tabular}
\end{table}
\begin{table}[htbp]
\centering
\small
\caption{Average training runtime (hours) and GPU used per dataset/method.}
\label{tab:runtime_gpu}
\begin{tabular}{llcc}
\toprule
Dataset & Method & Hours & GPU \\
\midrule
MNIST & Diag & $\sim$ 5 & NVIDIA L40S \\
MNIST & Rank-1 & $\sim$ 5 & NVIDIA L40S \\
MNIST & GR & $\sim$ 8 & NVIDIA L40S \\
MNIST & Diag + GR & $\sim$ 9 & NVIDIA L40S \\
MNIST & Rank-1 + GR & $\sim$ 9 & NVIDIA L40S \\
\midrule
FMNIST & Diag & $\sim$ 5 & NVIDIA L40S \\
FMNIST & Rank-1 & $\sim$ 5 & NVIDIA L40S \\
FMNIST & GR & $\sim$ 8 & NVIDIA L40S \\
FMNIST & Diag + GR & $\sim$ 9 & NVIDIA L40S \\
FMNIST & Rank-1 + GR & $\sim$ 9 & NVIDIA L40S \\
\midrule
CIFAR-10 & Diag & $\sim$ 4 & NVIDIA L40S \\
CIFAR-10 & Rank-1 & $\sim$ 4 & NVIDIA L40S \\
CIFAR-10 & GR & $\sim$ 7 & NVIDIA L40S \\
CIFAR-10 & Diag + GR & $\sim$ 8 & NVIDIA L40S \\
CIFAR-10 & Rank-1 + GR & $\sim$ 8 & NVIDIA L40S \\
\midrule
ImageNet-1k & Diag & $\sim$ 25 & NVIDIA H200 \\
ImageNet-1k & Rank-1 & $\sim$ 24 & NVIDIA H200 \\
ImageNet-1k & GR & $\sim$ 64 & NVIDIA H200 \\
ImageNet-1k & Diag + GR & $\sim$ 71 & NVIDIA H200 \\
ImageNet-1k & Rank-1 + GR & $\sim$ 70 & NVIDIA H200 \\
\midrule
\bottomrule
\end{tabular}
\end{table}

\subsection{Additional Dataset Details}
\label{app:datasets}
All datasets are resized or padded to $32\times32$ for consistency.  
Table~\ref{tab:dataset-details} summarizes their configurations.  

\begin{table}[h]
\scriptsize
\centering
\begin{tabular}{lcccc}
\toprule
\shortstack{Dataset} & \shortstack{\#Training Images\\ per Task} &  \shortstack{\#Tasks } & \shortstack{Description of Each Task} \\
\midrule
MNIST \citep{lecun1998mnist} & 12,000 & 5 & Generation of 2 classes of handwritten digits \\
Fashion-MNIST \citep{xiao2017fashion} & 12,000 & 5 & Generation of 2 classes of fashion products \\
CIFAR-10 \citep{krizhevsky2009learning} & 10,000 & 5 & Generation of 2 classes of common items \\
ImageNet-1k \citep{chrabaszcz2017imagenet} & $\sim$64,000 & 20 & Generation of 50 classes of ImageNet objects \\
\bottomrule
\end{tabular}
\caption{Detailed dataset configurations and task partitions used in our experiments.}
\label{tab:dataset-details}
\end{table}


\begin{description}[leftmargin=1.5cm,style=nextline]

\item[MNIST.] A dataset of handwritten digits ($0$–$9$) with $60{,}000$ training and $10{,}000$ test images. Each task contains two digit classes.\\
Task splits:  
$T_1=\{0,1\},\;
T_2=\{2,3\},\;
T_3=\{4,5\},\;
T_4=\{6,7\},\;
T_5=\{8,9\}$.

\item[Fashion MNIST.] A dataset of $10$ grayscale clothing categories (e.g., shirts, shoes, bags) with $60{,}000$ training and $10{,}000$ test images. Each task contains two categories.\\
Task splits:  
$T_1=\{0,1\},\;
T_2=\{2,3\},\;
T_3=\{4,5\},\;
T_4=\{6,7\},\;
T_5=\{8,9\}$.

\item[CIFAR-10.] A dataset of $32\times32$ RGB images across $10$ classes (e.g., animals, vehicles) with $50{,}000$ training and $10{,}000$ test images. Each task contains two classes.\\
Task splits:  
$T_1=\{0,1\},\;
T_2=\{2,3\},\;
T_3=\{4,5\},\;
T_4=\{6,7\},\;
T_5=\{8,9\}$.

\item[ImageNet-1k (downsampled).] A large-scale dataset with $1{,}000$ object categories and $1.28$ million training images. We use the $32\times32$ downsampled version for computational efficiency. Each task contains fifty classes.\\
Task splits:  
$T_1=\{0,\dots,49\},\;
T_2=\{50,\dots,99\},\;
\dots\;
T_{20}=\{950,\dots,999\}$.
\end{description}

\subsection{Additional Baselines}
\label{app:lowrank_ft}

\subsubsection{Ablations on low-rank Fisher surrogates}
\label{app:lowrank_ablation}
A key practical benefit of our analysis is that it yields an efficient and scalable estimate of the leading Fisher eigen-direction using only the empirical mean gradient $\mathbb{E}[g]$, avoiding explicit formation of the Fisher and the costly iterative procedures typically required to extract top eigenvectors in high-dimensional parameter spaces.
To contextualize this design choice, we compare against low-rank baselines that estimate the top-$k$ Fisher subspace via stochastic power iteration and then construct rank-$k$ approximations (rank-1 through rank-5).
Across both CIFAR-10 and FashionMNIST, these iterative low-rank baselines do not improve performance over our mean-gradient rank-1 estimate, and in many cases perform worse, consistent with the difficulty of reliably converging power iteration under limited compute.
Moreover, increasing the approximation rank beyond 1 yields little benefit, supporting the view that the curvature is already well-captured by the dominant direction in this regime.

\begin{table}[h]
    \centering
    \small
    \setlength{\tabcolsep}{8pt}
    \renewcommand{\arraystretch}{1.15}
    \caption{Low-rank ablations using rank-$k$ Fisher approximations (estimated via stochastic power iteration) versus our mean-gradient rank-1 estimate. We report final FID and average forgetting (mean $\pm$ standard error).}
    \label{tab:lowrank_ablation}
    \begin{tabular}{lcccc}
        \toprule
        & \multicolumn{2}{c}{CIFAR-10} & \multicolumn{2}{c}{FashionMNIST} \\
        \cmidrule(lr){2-3}\cmidrule(lr){4-5}
        Method & Final FID & Avg Forgetting & Final FID & Avg Forgetting \\
        \midrule
        GD + rank-1 (ours) & $50.5\pm1.2$ & $7.4\pm1.2$  & $15.4\pm0.6$ & $0.9\pm0.3$ \\
        GD + rank-2        & $57.5\pm2.6$ & $15.9\pm1.8$ & $18.1\pm1.6$ & $3.8\pm1.4$ \\
        GD + rank-3        & $60.4\pm5.4$ & $17.7\pm5.0$ & $16.8\pm0.9$ & $2.9\pm0.6$ \\
        GD + rank-4        & $55.7\pm3.4$ & $13.0\pm2.0$ & $17.0\pm1.2$ & $2.9\pm1.2$ \\
        GD + rank-5        & $55.6\pm2.3$ & $12.6\pm1.6$ & $17.1\pm1.2$ & $2.9\pm1.0$ \\
        \bottomrule
    \end{tabular}
\end{table}

\subsubsection{Continual fine-tuning without regularization}
\label{app:finetune_noreg}
We additionally report the performance of plain continual fine-tuning with no regularization (i.e., setting the penalty weight to $\lambda=0$).
This baseline isolates the extent of catastrophic forgetting in the absence of any stabilization mechanism, and serves as a sanity check that our regularized baselines are not benefiting from overly conservative hyperparameters.
As shown in \Cref{tab:noreg}, removing regularization yields substantially worse final generation quality and severe forgetting across datasets.

\begin{table}[h]
    \centering
    \small
    \setlength{\tabcolsep}{10pt}
    \renewcommand{\arraystretch}{1.15}
    \caption{Continual fine-tuning with no regularization ($\lambda=0$). We report final FID and average forgetting.}
    \label{tab:noreg}
    \begin{tabular}{lcc}
        \toprule
        Dataset & Final FID & Avg Forgetting \\
        \midrule
        CIFAR-10    & 115.410 & 72.496 \\
        FashionMNIST & 103.955 & 88.009 \\
        ImageNet32  & 265.902 & 67.864 \\
        MNIST       & 102.642 & 93.700 \\
        \bottomrule
    \end{tabular}
\end{table}

\subsection{Continual Learning Curves for each Task and Dataset}
\label{appendi:more-curves}
We provide a comprehensive set of FID evaluation plots for all tasks across the datasets considered in our experiments. These plots illustrate the model’s performance on individual tasks and complement the quantitative results presented in the main paper.

\begin{figure}[h]
\centering
\foreach \i in {0,...,4}{%
  \begin{minipage}{0.19\textwidth}
    \centering
    \includegraphics[width=\linewidth]{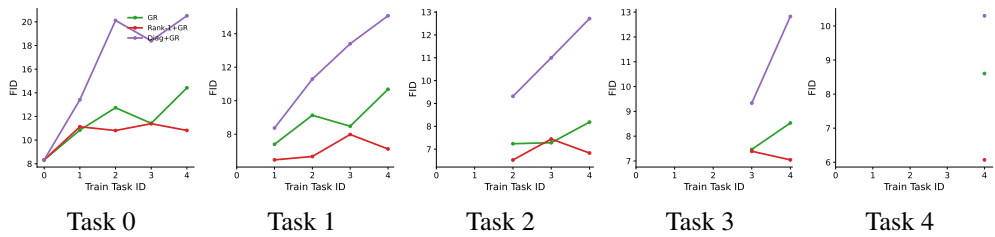}
    \par Task \i
  \end{minipage}%
}
\caption{FID plots for MNIST (generative replay).}
\label{app:fig:mnist_gr}
\end{figure}

\begin{figure}[h]
\centering
\foreach \i in {0,...,4}{%
  \begin{minipage}{0.19\textwidth}
    \centering
    \includegraphics[width=\linewidth]{plots/appendix_plots_split/mnist/non_gr/eval_task\i.pdf}
    \par Task \i
  \end{minipage}%
}
\caption{FID plots for MNIST (non-generative replay).}
\label{app:fig:mnist_non_gr}
\end{figure}

\begin{figure}[ht]
\centering
\foreach \i in {0,...,4}{%
  \begin{minipage}{0.19\textwidth}
    \centering
    \includegraphics[width=\linewidth]{plots/appendix_plots_split/fmnist/gr/eval_task\i.pdf}
    \par Task \i
  \end{minipage}%
}
\caption{FID plots for Fashion-MNIST (generative replay).}
\label{app:fig:fmnist_gr}
\end{figure}

\begin{figure}[ht]
\centering
\foreach \i in {0,...,4}{%
  \begin{minipage}{0.19\textwidth}
    \centering
    \includegraphics[width=\linewidth]{plots/appendix_plots_split/fmnist/non_gr/eval_task\i.pdf}
    \par Task \i
  \end{minipage}%
}
\caption{FID plots for Fashion-MNIST (non-generative replay).}
\label{app:fig:fmnist_non_gr}
\end{figure}

\begin{figure}[h]
\centering
\foreach \i in {0,...,4}{%
  \begin{minipage}{0.19\textwidth}
    \centering
    \includegraphics[width=\linewidth]{plots/appendix_plots_split/cifar10/gr/eval_task\i.pdf}
    \par Task \i
  \end{minipage}%
}
\caption{FID plots for CIFAR-10 (generative replay).}
\label{app:fig:cifar10_gr}
\end{figure}

\begin{figure}[h]
\centering
\foreach \i in {0,...,4}{%
  \begin{minipage}{0.19\textwidth}
    \centering
    \includegraphics[width=\linewidth]{plots/appendix_plots_split/cifar10/non_gr/eval_task\i.pdf}
    \par Task \i
  \end{minipage}%
}
\caption{FID plots for CIFAR-10 (non-generative replay).}
\label{app:fig:cifar10_non_gr}
\end{figure}

\begin{figure}[h]
\centering
\foreach \i in {0,...,19}{%
  \begin{minipage}{0.19\textwidth}
    \centering
    \includegraphics[width=\linewidth]{plots/appendix_plots_split/imagenet64/gr/eval_task\i.pdf}
    \par Task \i
  \end{minipage}%
  \ifnum\i=4 \par\fi
  \ifnum\i=9 \par\fi
  \ifnum\i=14 \par\fi
  \ifnum\i=19 \par\fi
}
\caption{FID plots for Imagenet-1k (generative replay).}
\label{app:fig:imagenet_gr}
\end{figure}

\begin{figure}[h]
\centering
\foreach \i in {0,...,19}{%
  \begin{minipage}{0.19\textwidth}
    \centering
    \includegraphics[width=\linewidth]{plots/appendix_plots_split/imagenet64/non_gr/eval_task\i.pdf}
    \par Task \i
  \end{minipage}%
  \ifnum\i=4 \par\fi
  \ifnum\i=9 \par\fi
  \ifnum\i=14 \par\fi
  \ifnum\i=19 \par\fi
}
\caption{FID plots for Imagenet-1k (non-generative replay).}
\label{app:fig:imagenet_non_gr}
\end{figure}


\end{document}